\documentclass{article}

\PassOptionsToPackage{numbers,compress}{natbib}
\usepackage{lmodern}
\usepackage[preprint]{neurips_2026}

\usepackage[utf8]{inputenc}
\usepackage[T1]{fontenc}
\usepackage[hidelinks]{hyperref}
\usepackage{url}
\usepackage{booktabs}
\usepackage{amsfonts}
\usepackage{nicefrac}
\usepackage[english]{babel}
\usepackage{microtype}
\usepackage[dvipsnames]{xcolor}
\usepackage[singlelinecheck=false]{caption}
\usepackage{multirow}
\usepackage{siunitx}
\usepackage{colortbl}
\usepackage{rotating}
\usepackage{adjustbox}
\usepackage{tabularray}
\usepackage{hhline}
\usepackage{makecell}
\usepackage{mdframed}

\usepackage{subcaption}

\usepackage{array}

\usepackage{amssymb}
\usepackage{graphicx}
\usepackage{enumitem}
\usepackage{mathtools}

\usepackage{amsmath} 
\usepackage{amsthm}

\newtheorem{definition}{Definition}[section]

\newtheorem{assumption}{Assumption}[section]
\newtheorem{theorem}{Theorem}[section]
\newtheorem{lemma}{Lemma}[section]

\usepackage{xspace}

\title{Expert Upcycling: Shifting the Compute-Efficient Frontier of Mixture-of-Experts}

\author
{\textbf{Chaitanya Dwivedi}\textsuperscript{1,2} \quad \textbf{Binxuan Huang}\textsuperscript{1,3} \quad \textbf{Himanshu Gupta}\textsuperscript{1} \\ \textbf{Pratik Jayarao}\textsuperscript{1,2}  \quad \textbf{Neeraj Varshney}\textsuperscript{1}  \quad \textbf{Bing Yin}\textsuperscript{1}  \\
\small{\textsuperscript{1}Amazon Stores Foundation AI \quad \textsuperscript{2}Carnegie Mellon University \quad \textsuperscript{3}Anthropic} \\
\tt\small {cdwivedi}@alumni.cmu.edu \\
}

\begin{document}

\maketitle

\begin{abstract}
Mixture-of-Experts (MoE) has become the dominant
architecture for scaling large language models:
frontier models routinely decouple total parameters
from per-token computation through sparse expert
routing.
Scaling laws show that under fixed active computation,
model quality scales predictably with total parameters,
and MoEs realize this by increasing expert count.
However, training large MoEs is expensive, as memory
requirements and inter-device communication both scale
with total parameter count.
We propose \emph{expert upcycling}, a method for
progressively expanding MoE capacity by increasing the
number of experts during continued pre-training (CPT).
Given a trained $E$-expert model, the upcycling
operator constructs an $mE$-expert model through expert
duplication and router extension while holding top-$K$
routing fixed, preserving per-token inference cost.
Duplication provides a \emph{warm initialization}: the
expanded model inherits the source checkpoint's learned
representations, starting from a substantially lower
loss than random initialization. Subsequent CPT then
breaks the symmetry among duplicated experts to drive
specialization.
We formalize the upcycling operator and develop a
theoretical framework decomposing the quality gap into
a capacity term and an initialization term. We further
introduce \emph{utility-based expert selection}, which
uses gradient-based importance scores to guide
non-uniform duplication, more than tripling gap closure
when CPT is limited. In our 7B$\to$13B total
parameter experiments, the upcycled model achieves lower
validation loss than the fixed-size baseline while saving
${\sim}$32\% of GPU hours at 50\% CPT. Extending to 100\% CPT
saves ${\sim}$24\% and closes the downstream accuracy gap
to within 0.3 points (average across 11 benchmarks). Comprehensive
ablations across model scales, activation ratios, MoE
architectures, and training budgets yield a practical
recipe for deploying expert upcycling, establishing it
as a principled, compute-efficient alternative to
training large MoE models from scratch\footnote{\label{fn:code}We release our code and training configurations at \url{https://github.com/amazon-science/expert-upcycling}.}.
\end{abstract}

\begin{figure*}[t]
  \centering
  \includegraphics[width=0.90\linewidth]{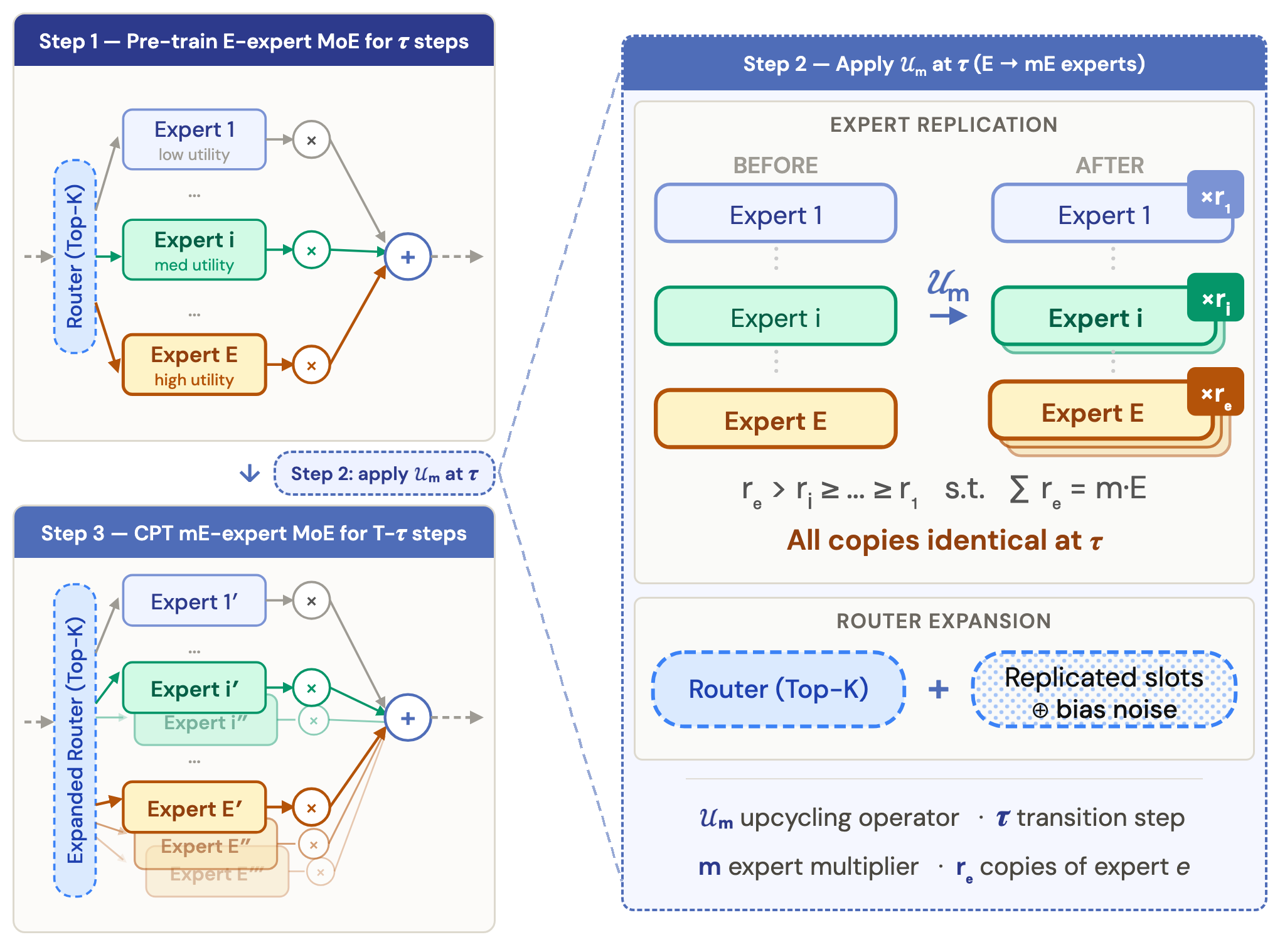}
  \caption{
    \textbf{Overview of the expert upcycling procedure.}
    \textbf{Step 1:} Pre-train an $E$-expert MoE for $\tau$ steps.
    \textbf{Step 2:} Apply the upcycling operator $\mathcal{U}_m$ at step $\tau$: each expert $e$ is replicated $r_e \geq 1$ times (high-utility experts receive more copies, $r_E > r_i \geq \cdots \geq r_1$, s.t.\ $\sum r_e = m \cdot E$), and the router is extended with replicated slots plus bias noise. All copies are identical at $\tau$, providing a warm initialization.
    \textbf{Step 3:} Continue pre-training on the expanded $mE$-expert model for $T{-}\tau$ steps; stochastic gradient diversity breaks symmetry among duplicates, driving specialization. Top-$K$ routing is fixed throughout, so active parameters and per-token compute are unchanged.
  }
  \label{fig:upcycling}
\end{figure*}

\section{Introduction}
\label{sec:intro}

Mixture-of-Experts (MoE) models have become the
dominant architecture for scaling language models
efficiently~\citep{shazeer2017outrageously,
jiang2024mixtral, deepseekai2024deepseekv3, kimik2}.
By routing each token to $K$ out of $E$ total experts,
MoEs decouple total parameters from per-token compute.
Scaling-law analyses show that under fixed active
computation, model quality scales predictably with
total parameters~\citep{ludziejewski2025jointmoe,
abnar2025optimalsparsity}, with the activation ratio
($K/E$) identified as the primary driver of MoE
efficiency over dense
models~\citep{tian2025greaterleverage}.
MoEs realize this by increasing expert count at fixed
$K$, expanding capacity without increasing inference
cost.
Frontier MoE models have pushed this aggressively:
Qwen3 activates 22B of 235B total
parameters~\citep{qwen3}, DeepSeek-V3 activates 37B of
671B~\citep{deepseekai2024deepseekv3}, and Kimi~K2
activates 32B of
1T~\citep{kimik2}, all matching or exceeding dense
models many times their active size.

Despite these favorable scaling properties, training
MoEs with a large number of total parameters from
scratch is expensive.
All expert weights, gradients, and optimizer states
must reside in accelerator memory regardless of how
few experts are active per token, so memory
requirements, and therefore the number of GPUs needed,
scale with the total parameter
count~\citep{zadouri2023moe, ludziejewski2025jointmoe}.
Further, distributing experts across devices introduces
all-to-all communication that can consume 45--50\% of
total training time on standard GPU
clusters~\citep{nie2024lshmoe}.
Both costs grow with $E$, making it progressively more
expensive to train MoEs at the low activation ratios
that scaling laws recommend.

This tension motivates \emph{expert upcycling}, a
capacity-expansion strategy for MoEs that obtains the quality
benefits of a larger MoE without paying its full
training cost from scratch.
Rather than committing to the full expert count from
step~0, training begins with a smaller $E$-expert
model.
At a chosen transition step~$\tau$, the upcycling
operator expands the model to $mE$ experts by
duplicating existing experts and extending the router,
increasing total parameters while holding active
parameters and per-token FLOPs fixed.
This two-phase strategy is strictly cheaper than
fixed-size training: both phases process the same
tokens, but the first $\tau$ steps execute on the
smaller model.
Unlike a randomly initialized larger MoE, the upcycled
model inherits the source checkpoint's learned
representations, providing a \emph{warm initialization}
that starts at approximately the same loss. Continued
pre-training (CPT) then breaks the symmetry among
duplicated experts, driving them to specialize.

To our knowledge, expert upcycling is the first method
to progressively grow MoE capacity during training
while preserving inference cost by holding top-$K$
routing fixed.
This distinguishes it from dense progressive training
methods~\citep{Chen2016Net2Net, du2024stackingtransformers}
and width-expansion approaches for
MoEs~\citep{yu2026sparkling}, which increase active
parameters and thus inference cost, and from sparse
upcycling~\citep{komatsuzaki2022sparse}, which
converts a dense model into an MoE but does not
address capacity expansion within already-sparse
architectures.

Our main contributions are:
\begin{enumerate}[nosep, label=(\roman*)]
  \item We propose \textbf{expert upcycling}, a
    method for progressively expanding MoE capacity
    by increasing the number of experts during
    continued pre-training, and formalize the
    duplication and router-extension operator
    (\S\ref{sec:method}).
    In our 7B$\to$13B total parameter experiments,
    the upcycled model achieves lower validation loss
    than the fixed-size baseline while saving ${\sim}$32\%
    of GPU hours at 50\% CPT; extending to 100\% CPT
    saves ${\sim}$24\% and closes the downstream
    accuracy gap to within 0.3 points (average across
    11 benchmarks).
    Across activation ratios, expert upcycling
    (MoE$\to$MoE) consistently outperforms sparse
    upcycling~\citep{komatsuzaki2022sparse}
    (dense$\to$MoE) as target activation ratio decreases.
  \item Within expert upcycling, we introduce
    \textbf{utility-based expert selection}, a novel
    operator that uses gradient-based importance
    scores to guide non-uniform duplication.
    Utility-based selection consistently outperforms
    uniform duplication, more than tripling gap closure
    when CPT budget is limited
    (\S\ref{sec:utility_upcycling}).
  \item We develop a \textbf{theoretical framework}
    that decomposes the quality gap into a capacity
    term and an initialization term, yielding testable
    predictions for when upcycling succeeds, and
    validate these predictions through
    \textbf{comprehensive experiments} across model
    scales (154M--7B total parameters), activation
    ratios (3--50\%), MoE architectures, training
    budgets, and transition points,
    deriving a practical recipe for practitioners
    (\S\ref{sec:method}, \S\ref{sec:results}). Code is provided as supplemental material and will be released on GitHub upon publication (footnote~\ref{fn:code}).
\end{enumerate}

\noindent
Together, these results establish expert upcycling as
a principled, compute-efficient paradigm for training
sparse models where progressive capacity
expansion is not merely a fallback for reusing
existing checkpoints, but has the potential to become the recommended training
strategy from the outset.

\section{Related work} 
\label{sec:related_work}

\paragraph{Scaling Mixture-of-Experts models.}
Sparsely-gated MoE layers enable high-capacity models with limited per-token computation~\citep{shazeer2017outrageously}, and recent work has scaled this paradigm to open-source frontier models~\citep{du2022glam, fedus2022switch, Lepikhin2020GShard, jiang2024mixtral, metalllama4, deepseekai2024deepseekv3, qwen3, kimik2, glm45}.
Across these systems the trend is consistent: total parameters grow aggressively while active parameters per token remain fixed, directly instantiating the scaling-law prediction that lower activation ratios yield better quality-per-FLOP trade-offs~\citep{tian2025greaterleverage, ludziejewski2025jointmoe, krajewski2024finegrained}.
Expert upcycling exploits precisely this property: by expanding total expert count mid-training while holding top-$K$ fixed, it captures the quality benefits of a larger MoE without paying the full training cost from scratch.

\paragraph{Growing model capacity during training.}
Progressive training grows networks mid-run to amortize compute: Net2Net introduced function-preserving width/depth transforms \citep{Chen2016Net2Net} and recent work extends these to Transformers via layer stacking~\citep{du2024stackingtransformers, agarwal2024stacking, bu2025dpt}; SPARKLING~\citep{yu2026sparkling} brings mid-training width expansion to MoE models.
All of these grow \emph{depth} or \emph{dense width}, raising active parameters and inference cost at each step.
A complementary line of work upcycles \emph{capacity} from an existing checkpoint: Sparse Upcycling converts a dense checkpoint into an MoE~\citep{komatsuzaki2022sparse}, with scaling laws derived in \citet{liew2025scalinglaws}; follow-ups improve initialization diversity~\citep{nakamura2025dropupcycling}, explore parameter-efficient variants~\citep{zhang2024bam, huang2025ders}, and study router design at scale~\citep{he2024upcyclingllm}.
These perform a dense$\,\to\,$MoE transition.
Closer to our setting, Nexus~\citep{gritsch2025nexus} expands an existing MoE by adaptively integrating new domain-specific experts.
Expert upcycling instead expands expert count \emph{within the MoE regime} by duplicating existing experts while holding top-$K$ fixed, increasing total parameters without increasing active parameters or inference cost.

\paragraph{Load balancing and saliency metrics.}
Imbalanced routing causes representation collapse in MoE, starving some experts of gradient signal~\citep{chi2022representationcollapse}; auxiliary balancing losses~\citep{shazeer2017outrageously, fedus2022switch} address this but trade off task performance, while loss-free load balancing~\citep{wang2024auxiliary} corrects imbalance via router-bias updates with no loss modification.
We adopt the loss-free scheme throughout so that every duplicated replica receives differentiated gradient signal, a prerequisite for symmetry breaking after upcycling.
Saliency metrics repurposed from pruning (weight magnitude~\citep{han2015learning}, second-order sensitivity~\citep{lecun1989optimal, hassibi1992second}, first-order Taylor approximations~\citep{molchanov2017taylor}, and Fisher-based estimators~\citep{soen2024tradeoffs, li2025fishersforfree}, with expert-level variants~\citep{lu2024experts}) identify which experts contribute most to the loss; we invert the pruning direction and use the same scores to choose which experts to duplicate, so pruning and upcycling act as duals (one recovers inference efficiency, the other expands training capacity).

\section{Expert upcycling}
\label{sec:method}
We introduce \textbf{expert upcycling}, a
capacity-expansion procedure that grows the number of
experts in an MoE model mid-training by reusing
learned parameters.
Given an $E$-expert MoE trained for $\tau$ gradient
steps, expert upcycling constructs an $mE$-expert
model by duplicating existing experts and extending
the router, formalized as the operator $U_m$ in
\S\ref{subsec:operator}, then continues training for
the remaining $T - \tau$ steps.
The conventional alternative, which we call
\emph{fixed-size training}, trains the $mE$-expert
model from random initialization for all $T$ steps.
Both approaches process the same number of training
tokens and produce a model with $mE$ experts and
identical per-token FLOPs since top-$K$ is fixed.

For expert upcycling to be a viable alternative to
fixed-size training, two conditions must hold:
(i)~it must be cheaper in total training compute, and
(ii)~it must close the quality gap to the from-scratch
model.
We address compute efficiency in
\S\ref{subsec:procedures} and quality gap closure in
\S\ref{subsec:operator}.

\subsection{Compute efficiency of expert upcycling}
\label{subsec:procedures}

The two approaches differ in training cost.
Let $s_E$ and $s_{mE}$ denote the per-step training
time of the $E$-expert and $mE$-expert models
respectively.
The larger model is more expensive per step due to
increased memory requirements, gradient and
optimizer-state updates over all expert parameters,
and all-to-all communication
overhead~\citep{du2024revisitingmoe, jin2025megascalemoe},
giving $s_E < s_{mE}$.
In our experimental setup (\S\ref{sec:setup}), we measure
$s_{mE} \approx 1.9 \times s_E$ when doubling expert
count (${\sim}2.2$s vs.\ ${\sim}4.2$s per step at 7B$\to$13B model scale).

The total training cost under each approach is
$\mathcal{C}^{\mathrm{fs}}$ for fixed-size training
and $\mathcal{C}^{\mathrm{up}}$ for expert upcycling:
\begin{align}
  \mathcal{C}^{\mathrm{fs}} &= T \times s_{mE}, \notag \\
  \mathcal{C}^{\mathrm{up}} &= \tau \times s_E + (T - \tau) \times s_{mE}.
\end{align}
Expert upcycling is strictly cheaper because the first
$\tau$ steps execute on the smaller model:
\begin{equation}
  \mathcal{C}^{\mathrm{fs}} - \mathcal{C}^{\mathrm{up}}
  \;=\;
  \tau \times (s_{mE} - s_E)
  \;>\; 0.
  \label{eq:flop_saving}
\end{equation}
The saving grows linearly with $\tau$ and the per-step
cost gap $s_{mE} - s_E$.
In our 7B-scale experiments, we find that
$\tau \approx \tfrac{2}{3}T$ is sufficient for the
upcycled model to match the from-scratch validation loss
(\S\ref{sec:results}), which translates to
${\sim}$32\% reduction in GPU hours.

\paragraph{Expanding existing models.}
When a trained $E$-expert checkpoint already exists, for instance from a prior training run or a public release, the
compute advantage is even larger: the pre-training
cost $\tau \times s_E$ is \emph{sunk}, and expert
upcycling requires only the incremental
$(T - \tau) \times s_{mE}$ for CPT on the expanded
model.
This makes expert upcycling particularly attractive
for continued pre-training of publicly available MoE
models.
In our 7B-scale experiments, the sunk-cost setting
at $\tau \approx \tfrac{2}{3}T$ reduces GPU hours by
${\sim}67\%$ compared to fixed-size training
(${\sim}50\%$ at $\tau = \tfrac{1}{2}T$).

\subsection{Quality gap closure and the upcycling operator}
\label{subsec:operator}

A cheaper procedure is only useful if it preserves
quality.  Adapting the OCO regret-telescoping analysis
of~\citet{bu2025dpt} for progressive training, we
decompose the gap between upcycling and fixed-size
training as a sum of two interpretable terms (formal
derivation in Appendix~\ref{app:proof}).

\begin{itemize}[nosep,leftmargin=*]
  \item \textbf{Capacity gap (Term~I):} during the
    first $\tau$ steps, training is in the $E$-expert
    model, a strictly less expressive class than the
    $mE$-expert target.  The gap between the two
    classes' optima is inherited by the upcycled model
    in proportion to $\tau/T$.
  \item \textbf{Initialization gain (Term~II):} at
    step $\tau$, the upcycled model must be initialized
    in the expanded parameter space.  The closer this
    initialization is to the $mE$-expert optimum
    (compared to fixed-size's random initialization),
    the larger upcycling's head start.
\end{itemize}

The compute saving $\tau(s_{mE} - s_E)$ from
\S\ref{subsec:procedures} and the capacity gap
(Term~I) pull $\tau$ in opposite directions: larger
$\tau$ saves more compute but widens Term~(I), while
smaller $\tau$ shrinks Term~(I) but erodes the compute
advantage.  We perform an ablation in
\S\ref{sec:results-budget} to identify a good
operating point for $\tau$.  Term~(II), in contrast,
is independent of $\tau$: it depends only on how the
expanded parameters are constructed, offering a
second, $\tau$-independent lever to close the quality
gap.  We therefore introduce an \emph{expert upcycling
operator} designed to maximize the initialization
gain by producing a warm start close to the
$mE$-expert optimum.

\begin{definition}[Expert Upcycling Operator]
\label{def:upcycling_operator}
Fix an integer expansion factor $m \ge 2$.
Given a trained $E$-expert parameter vector
$\theta_E \in \Theta_E$, the operator
$U_m \colon \Theta_E \to \Theta_{mE}$ constructs
$U_m(\theta_E)$ as follows:
\begin{enumerate}[nosep]
  \item \textbf{Expert replication.}
    Assign replication counts $\{r_e\}_{e=1}^{E}$
    with $r_e \ge 1$ and $\sum_e r_e = mE$;
    copy the parameters of expert~$e$ exactly $r_e$
    times.
    The canonical choice is \emph{uniform} replication
    ($r_e = m$ for all~$e$);
    non-uniform allocations are developed in
    \S\ref{sec:utility_upcycling}.
  \item \textbf{Router extension.}
    Copy the router weight vector of source expert~$e$
    to each of its $r_e$ replicas.
    Add independent noise
    $\epsilon \sim \mathcal{U}(-\delta,\delta)$
    (with $\delta \ll 1$) to the router \emph{biases}
    of replicated experts only, leaving the source
    experts' router parameters unchanged.
\end{enumerate}
\end{definition}

By construction, each new expert starts from a trained
weight vector and the router approximately reproduces
the pre-expansion routing distribution.
All other parameters (attention layers, embeddings,
and layer norms) are unchanged.
The post-expansion loss therefore satisfies
$\mathcal{L}_{mE}(U_m(\theta_E))
 \approx \mathcal{L}_E(\theta_E)$,
with a gap below $10^{-2}$ in practice (see \S\ref{sec:results}).
We refer to this property as \emph{warm initialization}:
$U_m$ places the expanded parameters substantially
closer to the $mE$-expert optimum than random
initialization, reducing Term~(II).\footnote{Exact
function preservation \`a la
Net2Net~\citep{Chen2016Net2Net} is structurally
unavailable here because top-$K$ routing is discrete;
expert upcycling therefore targets low initialization
loss directly rather than exact preservation.}

\subsection{Utility-based upcycling}
\label{sec:utility_upcycling}

Definition~\ref{def:upcycling_operator} leaves the
replication counts $\{r_e\}$ unspecified; uniform
replication ($r_e = m$ for all $e$) is the natural
baseline.  However, MoE experts are heterogeneous in
their contribution to the objective~\citep{lu2024experts},
so a non-uniform allocation that concentrates capacity
on high-importance experts should yield a warm start
closer to the $mE$-expert optimum, tightening Term~(II).
We allocate replicas using gradient-based importance
scores repurposed from the structured pruning
literature~\citep{lecun1989optimal, hassibi1992second,
molchanov2017taylor}.  We evaluate two first-order
utility scores, both computed from the gradient
$g_e = \nabla_{w_e}\mathcal{L}$ evaluated at
transition step~$\tau$:
\begin{itemize}[nosep,leftmargin=*]
  \item \textbf{Squared gradient norm:}
    $u_G(e) = \|g_e\|_2^2$, which captures how
    sensitive the loss is to each expert's parameters.
  \item \textbf{Weight--gradient saliency:}
    $u_{\mathrm{SAL}}(e)
     = \|w_e\|_2 \cdot \|g_e\|_2$, which combines
    parameter magnitude with gradient signal.
\end{itemize}
Both scores can be derived from a first-order Taylor
expansion of the loss
(Appendix~\ref{app:utility_theory}).
Replicas are allocated greedily to the
highest-scoring experts.
Both scores offer similar improvements over uniform
duplication, with squared gradient norm marginally
but consistently outperforming weight--gradient
saliency.
We also tried curvature-normalized variants
($\|g_e\|_2^2 / H_e$) and weight-norm-only scoring
($\|w_e\|_2^2$); neither worked as well
(Appendix~\ref{app:utility_theory}).

Beyond utility-based selection, we evaluated a broader
family of diversity-inducing heuristics at the expert
and router level (noise injection, orthogonalization,
drop-based re-initialization, interpolation, and
others; 20 variants in total, detailed in
Appendix~\ref{app:heuristic_upcycling_methods}).  None
exceed simple copy-paste duplication by more than
$10^{-3}$ in validation loss, indicating that
warm-start quality in this setting is dominated by
which experts are replicated rather than how the
copied parameters are perturbed.

\subsection{Post-expansion dynamics}
\label{subsec:dynamics}

Immediately after applying $U_m$, the replicated
experts are near-identical copies.
Three mechanisms break this symmetry during CPT:
the router bias perturbation from $U_m$ creates
initial routing asymmetry; loss-free load
balancing~\citep{wang2024auxiliary} ensures every
replica receives gradient signal; and stochastic
gradient diversity drives a self-reinforcing cycle
of specialization (different parameters $\to$
different routing $\to$ different gradients).

\section{Experimental setup}

\label{sec:setup}

\paragraph{Architecture and training.}
Our main result uses a 20-layer interleaved MoE,
which alternates dense and MoE layers as in
Llama~4~\citep{metalllama4}, with
${\sim}$7B$\to$13B total and ${\sim}$1B active
non-embedding parameters.
We focus on the interleaved architecture for the
majority of our experiments, including both recipe
ablations and the 7B-scale run, as only half the
layers incur all-to-all communication, substantially
reducing per-step training
time~\citep{du2024revisitingmoe, deepseekai2024deepseekv3}
and allowing faster iteration on an NVIDIA A100 GPU
cluster.
Most ablations for deriving the practical upcycling
recipe are conducted at the ${\sim}$1B total parameter
scale on the same interleaved architecture.
To assess generalizability to full MoE, we conduct
an additional ablation at the ${\sim}$1B scale on a
full MoE architecture with 256 experts and top-$K{=}8$,
matching the routing configuration of frontier MoE
models~\citep{deepseekai2024deepseekv3, glm45, kimik2}.
All models use top-$K$ gating with $K \in \{2, 8\}$ and no shared experts, Grouped Query Attention (GQA), and RoPE positional embeddings.
Full model architecture details and optimal training hyperparameters, as determined by preliminary scaling sweeps, are provided in Tables~\ref{tab:model_configs_interleaved}--\ref{tab:model_configs_fullmoe} (Appendix~\ref{app:model_configs}).
Optimization follows a Warmup--Stable--Decay (WSD) schedule with loss-free load balancing~\citep{wang2024auxiliary}. 
Training is performed using data parallelism and tensor parallelism.

\paragraph{Data.}
To separate the effects of pre-training from continued pre-training (CPT), we use disjoint data splits: the base models are trained on the pre-training split, and all CPT stages are performed on a separate CPT split, thereby avoiding data leakage between stages. Small-scale ablation experiments use DCLM~\citep{li2024datacomplm}. The 7B-scale main experiment uses a curated data mixture emphasizing instruction following, logical reasoning, and math.

\paragraph{Evaluation protocol.}
For each experiment, we compare three configurations:
a fixed $E$-expert model with no expansion, our upcycled
$E{\to}mE$ model, and a fixed $mE$-expert model
trained from scratch, all at matched total token
budget. Each training stage concludes with a 10\%
annealing phase.
For the 7B-total/1B-active main result, we report both
downstream benchmark accuracy (11 benchmarks) and
validation loss. For the ${\sim}$1B-total/${\sim}$144M-active-non-embedding
ablation experiments, we report validation loss only,
as most downstream benchmarks do not reliably
differentiate models at this
scale~\citep{wei2022emergent}.
To compare across settings, we report \emph{upcycling
efficiency} over validation loss $L$:
\begin{equation}
  \eta = \frac{L(\text{Fixed-}E) - L(\text{Upcycled})}
             {L(\text{Fixed-}E) - L(\text{Fixed-}mE)},
\end{equation}
which measures normalized gap closure;
a value of $1$ indicates complete gap closure.

\paragraph{Upcycling procedure.}
At transition step $\tau$, optimizer states are reset (matching sparse upcycling~\citep{komatsuzaki2022sparse}), and utility scores are computed from gradients averaged over 10 batches. Replicas are allocated greedily subject to a per-expert duplicate cap of $n{=}3$, preventing any single expert from monopolizing the allocation. The router-bias noise in Definition~\ref{def:upcycling_operator} uses $\delta = 10^{-3}$.

\section{Results}
\label{sec:results}

We first demonstrate expert upcycling at the full 7B$\to$13B scale (\S\ref{sec:results-7b}). The design choices for this experiment (operator, CPT budget, and transition timing) were informed by systematic ablations at the $\sim$1B scale (\S\ref{subsec:recipe-ablations}), which study each mechanism of upcycling and yield a practical recipe.

\subsection{Expert upcycling at scale}

\label{sec:results-7b}

\begin{table*}[t]
\centering
\caption{Expert upcycling at scale. A 20-layer interleaved MoE ($\sim$7B total / $\sim$1B active non-embedding parameters) is pre-trained with 32 experts on 380B tokens, then continued for 50\% (190B) or 100\% (380B) of the pre-training budget. \textbf{Fixed-32}: 32-expert model continued without expansion. \textbf{Upcycled 32$\to$64 (Ours)}: expand to 64 experts via duplication, then CPT. \textbf{Fixed-64}: 64-expert model trained from scratch. Best value per row in \textbf{bold}. Accuracy higher$\uparrow$; Val.\ Loss lower$\downarrow$.}
\label{tab:main_highlight}
\small
\begin{tblr}{
  colspec  = {l ccc ccc},
  hline{1} = {1.5pt, solid},
  hline{Z} = {1.5pt, solid},
  hline{2} = {2-4}{0.4pt, solid},
  hline{2} = {5-7}{0.4pt, solid},
  hline{3} = {0.8pt, solid},
  hline{4} = {0.4pt, dashed},
  hline{15} = {0.4pt, solid},
  cell{1}{2} = {c=3}{c, font=\bfseries},
  cell{1}{5} = {c=3}{c, font=\bfseries},
  cell{2}{1-7} = {font=\bfseries},
  cell{3}{3} = {bg=gray!15},
  cell{3}{6} = {bg=gray!15},
  column{3,6} = {bg=gray!15},
}
  & 50\% CPT & & & 100\% CPT & & \\
  Metric & Fixed-32 & Upcycled (Ours) & Fixed-64 & Fixed-32 & Upcycled (Ours) & Fixed-64 \\
  Val.\ Loss ($\downarrow$) & 1.339 & \textbf{1.305} & 1.308 & 1.301 & \textbf{1.263} & 1.267 \\
  MMLU        & 43.9 & 46.2          & \textbf{47.4} & 47.5 & 52.3          & \textbf{52.7} \\
  BBH CoT     & 19.4 & 28.0          & \textbf{33.8} & 33.4 & 43.8          & \textbf{45.0} \\
  GSM8K       & 28.1 & 36.0          & \textbf{40.1} & 39.3 & 48.3          & \textbf{49.1} \\
  IFEval      & 19.6 & 24.4          & \textbf{27.5} & 20.7 & 27.6          & \textbf{29.4} \\
  HellaSwag   & 62.4 & \textbf{65.0} & 63.9          & 65.1 & \textbf{67.3} & 66.1 \\
  ARC-Challenge & 45.6 & 46.2        & \textbf{47.3} & 47.5 & 48.7          & \textbf{48.8} \\
  ARC-Easy    & 68.8 & 72.9          & \textbf{74.4} & 74.4 & \textbf{76.0} & 75.3 \\
  PIQA        & 74.2 & \textbf{75.9} & 75.5          & 76.6 & \textbf{77.4} & 77.5 \\
  OpenBookQA  & 36.0 & \textbf{39.4} & 38.4          & 37.8 & 38.8          & \textbf{39.8} \\
  SciQ        & 91.7 & 93.0          & \textbf{93.7} & 93.3 & 94.0          & \textbf{94.1} \\
  Social IQA  & 43.6 & \textbf{46.3} & 45.5          & 46.5 & \textbf{46.5} & 46.1 \\
  \textbf{Avg (acc $\uparrow$)} & 48.5 & 52.1 & \textbf{53.4} & 52.9 & 56.4 & \textbf{56.7} \\
\end{tblr}
\end{table*}

Table~\ref{tab:main_highlight} reports validation loss and downstream benchmark accuracy across 11 tasks for Fixed-32, the upcycled 32$\to$64 model (gradient-norm guided duplication, \S\ref{sec:results-utility}), and Fixed-64 at 50\% and 100\% CPT.
At 50\% CPT, the upcycled model surpasses Fixed-64
on validation loss (efficiency 109.7\%); average
downstream accuracy across 11 benchmarks is within
1.3 points of Fixed-64 (52.1 vs.\ 53.4), saving
${\sim}$32\% of GPU-hours.
Commonsense and language understanding tasks
(HellaSwag, PIQA, Social~IQA, OpenBookQA)
match or exceed Fixed-64 at this point.
The remaining gap concentrates in knowledge and
reasoning tasks (MMLU, BBH, GSM8K, IFEval), which
continue to improve with additional CPT.
By 100\% CPT, these tasks largely converge as well
(e.g., BBH: 43.8 vs.\ 45.0; GSM8K: 48.3 vs.\ 49.1),
bringing average accuracy to 56.4 vs.\ 56.7 and
validation loss to 1.263 vs.\ 1.267, with efficiency
111.8\%.

Immediately after upcycling, the 64-expert model's training loss on the CPT split is 1.38, close to the 32-expert source at 1.32 and far below the 10.5 of a randomly initialized 64-expert model; within the first 22B CPT tokens ($\sim$6\% of pre-training) the upcycled loss falls below Fixed-32 and subsequent CPT closes the gap to Fixed-64. Doubling the CPT budget from 50\% to 100\% raises average accuracy from 52.1 to 56.4, consistent with the capacity-gap penalty (Term~I) shrinking as CPT length grows.

\paragraph{Generalization to full MoE.}
Expert upcycling also transfers beyond the interleaved architecture: on a full MoE with 256 experts and top-$K{=}8$ ($\sim$3\% activation ratio), $\|g\|^2$ utility-based upcycling achieves $\geq$93\% gap-closure efficiency across 154M--1B total parameters (Appendix~\ref{app:fullmoe_results}, Table~\ref{tab:fullmoe_results}).

\subsection{Recipe ablations}
\label{subsec:recipe-ablations}

Three ablations at the $\sim$1B scale test the two-term decomposition of \S\ref{subsec:operator} and yield the recipe used in the 7B$\to$13B run. Training-budget allocation (\S\ref{sec:results-budget}) trades Term~(I) against Term~(II) via transition timing and CPT length. Operator design (\S\ref{sec:results-utility}) targets Term~(II) along two axes: which experts to duplicate, and how to initialize the duplicates. The activation-ratio sweep (\S\ref{sec:results-sparsity}) stress-tests Term~(I) by widening the source--target capacity gap, contrasting expert upcycling with sparse (dense$\to$MoE) upcycling. All three share a 10-layer interleaved MoE (32$\to$64 experts, $\sim$1B total / 144M active non-embedding; Table~\ref{tab:model_configs_interleaved}).

\subsubsection{Training budget allocation: pre-training, CPT, and transition timing}
\label{sec:results-budget}

Term~(I) predicts that gap closure is governed by how much training is spent in the expanded model relative to the smaller one.
This motivates two practical questions: \emph{when training from scratch, at what point should the transition occur?} And \emph{given an already pre-trained model, how much CPT is needed after upcycling?}
We test these with two experiments (Table~\ref{tab:budget_highlight}), using the ${\sim}$1B-total/144M-active 10-layer interleaved MoE pre-trained for ${\sim}$50K steps.

\begin{table*}[t]
\centering
\caption{Training budget allocation for expert upcycling (10-layer interleaved MoE, 32$\to$64 experts, top-$K{=}2$, pre-trained for $\tau{=}50$K steps; upcycled model uses $\|g\|^2$ utility-based duplication). \textbf{(a)} When to upcycle when training from scratch ($T{=}100$K steps fixed, transition point $\tau$ varies). \textbf{(b)} How much CPT is needed after upcycling an already pre-trained model ($\tau$ fixed, $T$ varies; $\tau/T$ is the fraction of total training spent in the smaller model).}
\label{tab:budget_highlight}
\small
\setlength{\tabcolsep}{4pt}
\begin{tblr}{
  colspec = {rr rrrQ[r,wd=0.8cm] rr rrrQ[r,wd=0.8cm]},
  hline{1} = {1.5pt, solid},
  hline{Z} = {1.5pt, solid},
  hline{2} = {1-6}{0.4pt, solid},
  hline{2} = {7-12}{0.4pt, solid},
  hline{3} = {0.8pt, solid},
  cell{1}{1} = {c=6}{c},
  cell{1}{7} = {c=6}{c},
  cell{2}{1-12} = {font=\bfseries},
  row{1-2} = {bg=white},
  column{4} = {bg=gray!15},
  column{10} = {bg=gray!15},
}
\textit{(a) When to upcycle? ($T{=}100$K fixed, $\tau$ varies)} & & & & & &
\textit{(b) CPT budget sweep ($\tau{=}50$K fixed, $T$ varies)} & & & & & \\
\makecell{$\tau$\\steps} & \makecell{$\tau/T$} & \makecell{Fixed\\32} & \makecell{Up-\\cycled} & \makecell{Fixed\\64} & Eff.(\%) &
\makecell{CPT\\steps} & \makecell{$\tau/T$} & \makecell{Fixed\\32} & \makecell{Up-\\cycled} & \makecell{Fixed\\64} & Eff.(\%) \\
5K  & 0.05 & 2.804 & 2.753 & 2.741 & 81.0 & 5K  & 0.91 & 2.850 & 2.833 & 2.801 & 34.7 \\
13K & 0.13 & 2.805 & 2.754 & 2.754 & 100.0 & 13K & 0.80 & 2.835 & 2.803 & 2.785 & 64.0 \\
25K & 0.25 & 2.806 & 2.754 & 2.754 & 100.0 & 25K & 0.67 & 2.823 & 2.780 & 2.772 & 84.3 \\
38K & 0.38 & 2.806 & 2.757 & 2.757 & 100.0 & 38K & 0.57 & 2.815 & 2.769 & 2.763 & 88.5 \\
51K & 0.51 & 2.809 & 2.759 & 2.758 & 98.0  & 51K & 0.50 & 2.809 & 2.759 & 2.758 & 98.0 \\
\end{tblr}
\end{table*}

\paragraph{Transition timing.}
Under a fixed total budget of 100K steps (Table~\ref{tab:budget_highlight}a), upcycling early ($\tau/T \leq 0.25$) achieves near-complete gap closure (94--100\% efficiency). Very early transitions ($\tau/T = 0.05$) underperform slightly, likely because the source model has seen too few tokens for experts to develop meaningful specialization, weakening the warm initialization that Term~(II) relies on.

\paragraph{CPT budget.}
Sweeping CPT from 10\% to 100\% of the pre-training budget (Table~\ref{tab:budget_highlight}b), efficiency rises monotonically from 34.7\% to 98.0\%. At least 50\% CPT is needed for strong gap closure, consistent with Term~(I): duplicated experts require sufficient post-upcycling optimization to break symmetry and specialize. Together, these results confirm that CPT budget is the binding constraint: pre-training determines initialization quality, while CPT determines the extent of expert differentiation.

\subsubsection{Expert upcycling strategies}
\label{sec:results-utility}

\paragraph{Which experts to duplicate?}
The 7B$\to$13B result used gradient-norm guided duplication; here we study how much the choice of duplication strategy matters.
Since experts in a trained MoE contribute unevenly to the objective, non-uniform duplication that concentrates capacity on high-importance experts may yield a better initialization than simply copying every expert once.
We compare four utility-based expert duplication strategies from \S~\ref{sec:utility_upcycling} (weight norm $u_{\text{WN}}$, gradient norm $u_{\text{GN}}$, saliency $u_{\text{SAL}}$, and curvature-normalized utility $u_{\text{CN}}$) against two baselines: uniform duplication and random initialization~\citep{he2015kaiming}, across CPT budgets, with experts ranked per layer and duplicated via greedy selection with replacement. Table~\ref{tab:duplication_comparison} summarizes results.

\begin{table*}[t]
\centering
\caption{Comparison of duplication strategies on the 10-layer interleaved MoE (32$\to$64 experts, top-$K{=}2$, $\sim$1B total / 144M active non-embedding parameters; upcycling happens at $\tau{=}50$K steps for all rows). Rows correspond to different CPT budgets. \textbf{Fixed-32} and \textbf{Fixed-64} bracket the achievable range. \textbf{Random} initializes the new 32 experts with Kaiming initialization~\citep{he2015kaiming}. \textbf{Uniform} copies every expert once. The remaining four columns are utility-based strategies that preferentially duplicate high-importance experts: weight norm ($\|w\|^2$), saliency ($\|g\|\cdot\|w\|$), gradient norm ($\|g\|^2$), and curvature-normalized ($\|g\|^2/H$). All values are validation loss ($\downarrow$).}
\label{tab:duplication_comparison}
\small
\setlength{\tabcolsep}{4pt}
\begin{tblr}{
  colspec  = {l cc cc cccc},
  hline{1} = {1.5pt, solid},
  hline{Z} = {1.5pt, solid},
  hline{2} = {4-9}{0.4pt, solid},
  hline{3} = {2-3}{0.4pt, solid},
  hline{3} = {4-9}{0.4pt, solid},
  hline{4} = {0.8pt, solid},
  cell{1}{2} = {c=2}{c, font=\bfseries},
  cell{1}{4} = {c=6}{c, font=\bfseries},
  cell{2}{4} = {c=2}{c},
  cell{2}{6} = {c=4}{c},
  cell{3}{1-9} = {font=\bfseries},
  row{1-3} = {bg=white},
  column{8} = {bg=gray!15},
}
  & Fixed & & Expert Upcycling & & & & & \\
  & & & Non-utility & & Utility-based & & & \\
  CPT steps & Fixed-32 & Fixed-64 & Random & Uniform & $\|w\|^2$ & $\|g\|\cdot\|w\|$ & $\|g\|^2$ & $\|g\|^2/H$ \\
  13K  & 2.857 & 2.808 & 3.107 & 2.853 & 2.846 & 2.846 & 2.844 & 2.845 \\
  25K  & 2.835 & 2.785 & 3.010 & 2.809 & 2.807 & 2.809 & 2.804 & 2.805 \\
  38K  & 2.821 & 2.769 & 2.969 & 2.787 & 2.785 & 2.778 & 2.773 & 2.776 \\
  50K  & 2.809 & 2.758 & 2.963 & 2.769 & 2.771 & 2.766 & 2.759 & 2.768 \\
\end{tblr}
\end{table*}

Selective duplication consistently outperforms both baselines at every CPT budget. Random initialization performs far worse than even the Fixed-32 baseline (e.g., loss 3.107 vs.\ 2.857 at 25\% CPT), confirming that warm initialization is essential. Among warm-start strategies, utility-based selection outperforms uniform duplication at every CPT budget, with the largest gains when CPT is limited, more than tripling gap closure at 25\% CPT (26.5\% vs.\ 8.2\%, computed from Table~\ref{tab:duplication_comparison}). The advantage narrows at higher budgets as longer training partially compensates for initialization differences. Among the four utility strategies, gradient norm ($\|g\|^2$) performs best overall; curvature-normalized ($\|g\|^2/H$) and saliency ($\|g\|\cdot\|w\|$) are close behind, and weight-norm-only ($\|w\|^2$) lags slightly. In practice, $\|g\|^2$ is the recommended default.

\paragraph{Initialization diversity does not substitute for initialization quality.}
We also evaluated 20 diversity-inducing initialization heuristics (noise injection, drop upcycling~\citep{nakamura2025dropupcycling}, interpolation, orthogonalization, SVD-based perturbation, sparse-code mixing; 10 expert-level and 10 router-level variants); none exceed simple copy-paste duplication by more than $10^{-3}$ in validation loss. Across all $n{=}65$ utility + heuristic runs, the Spearman rank correlation between initialization and terminal loss is $\rho{=}0.80$ (val) / $0.86$ (train): runs that start worse end worse.

\subsubsection{Effect of activation ratio and comparison with sparse upcycling}
\label{sec:results-sparsity}

Term~(I) penalizes the capacity gap
$\mathcal{L}_E^\star - \mathcal{L}_{mE}^\star$ between source and target.
We study this on the 8-layer interleaved MoE with top-$K{=}1$ and uniform
duplication, comparing expert upcycling (MoE$\,\to\,$MoE) against sparse
upcycling~\citep{komatsuzaki2022sparse} (dense$\,\to\,$MoE) across target
activation ratios from 25\% down to 3.13\%
(Table~\ref{tab:sparsity_highlight}).
For each target, expert upcycling starts from an MoE base with half the
target expert count, while sparse upcycling starts from a dense checkpoint.

Expert upcycling consistently produces losses close to the Fixed-$mE$
ceiling across all activation ratios, though the residual gap grows
at lower ratios (0.005 at 25\% vs.\ 0.020 at 3.13\%, computed from Table~\ref{tab:sparsity_highlight}).
Sparse upcycling, by contrast, fails to match even the Fixed-$E$ baseline
in every setting: the dense$\,\to\,$MoE transition spans too large a
capacity gap for CPT to close, confirming the Term~(I) prediction.
The gap between the two methods widens as the target activation ratio
decreases, from 0.026 at 25\% to 0.241 at 3.13\%.
Stabilized sparse-upcycling variants~\citep{nakamura2025dropupcycling, jiang2025improved} could narrow this gap and are future work.

\begin{table}[t]
\centering
\caption{Effect of activation ratio and comparison with sparse upcycling (8-layer interleaved MoE, top-$K{=}1$, uniform duplication). Expert upcycling (Ours) starts from an MoE base with half the target expert count; sparse upcycling starts from a dense checkpoint. Both methods target the same $mE$-expert model (Fixed-$mE$). All values are validation loss ($\downarrow$).}
\label{tab:sparsity_highlight}
\small
\setlength{\tabcolsep}{4pt}
\begin{tblr}{
  colspec = {ccccc},
  hline{1} = {1.5pt, solid},
  hline{Z} = {1.5pt, solid},
  hline{2} = {0.8pt, solid},
  row{1} = {bg=white, font=\bfseries},
  column{4} = {bg=gray!15},
}
  \makecell{Target\\$K/E$} & Fixed-$E$ & Fixed-$mE$ & \makecell{Ours\\(MoE$\to$MoE)} & \makecell{Sparse Upc.\\(Dense$\to$MoE)} \\
  25\%   & 3.085 & 3.056 & 3.061 & 3.087 \\
  12.5\% & 3.056 & 3.018 & 3.025 & 3.086 \\
  6.25\% & 3.018 & 2.986 & 2.992 & 3.069 \\
  3.13\% & 2.894 & 2.788 & 2.808 & 3.049 \\
\end{tblr}
\end{table}

\section{Discussion}
\label{sec:discussion}
\label{sec:limitations}

We introduced expert upcycling, which progressively expands MoE capacity by duplicating experts and extending the router mid-training while holding top-$K$ routing fixed to preserve inference cost. A theoretical decomposition into a capacity gap and an initialization gain guided the operator design and ablations; in our 7B$\to$13B experiments the upcycled model achieves lower validation loss than the fixed-size baseline while saving 24--32\% of GPU hours (50--67\% when the source model's pre-training cost is sunk); at 100\% CPT, the downstream accuracy gap closes to within 0.3 points (average across 11 benchmarks). The activation-ratio sweep (\S\ref{sec:results-sparsity}) further suggests iterated upcycling, repeatedly doubling expert count through successive steps to keep Term~(I) small, as a natural extension.

Our results cover $m{=}2$ upcycling on MoE architectures up to 7B parameters, with DCLM for ablations and an English-majority mixture for the 7B run. Open directions include larger $m$, frontier-scale ($>$10B) models, and multilingual or distribution-shifted CPT.

{
\bibliographystyle{plainnat}
\bibliography{ref}

@inproceedings{he2015kaiming,
  title     = {Delving Deep into Rectifiers: Surpassing Human-Level Performance on {ImageNet} Classification},
  author    = {He, Kaiming and Zhang, Xiangyu and Ren, Shaoqing and Sun, Jian},
  booktitle = {Proceedings of the IEEE International Conference on Computer Vision (ICCV)},
  year      = {2015}
}

@inproceedings{han2015learning,
  title     = {Learning both Weights and Connections for Efficient Neural Networks},
  author    = {Han, Song and Pool, Jeff and Tran, John and Dally, William J.},
  booktitle = {Advances in Neural Information Processing Systems (NeurIPS)},
  volume    = {28},
  year      = {2015}
}

@inproceedings{ludziejewski2025jointmoe,
  title        = {Joint {MoE} Scaling Laws: Mixture of Experts Can Be Memory Efficient},
  author       = {Jan Ludziejewski and Maciej P{\'i}oro and Jakub Krajewski and
                  Maciej Stefaniak and Micha{\l} Krutul and Jan Ma{\l}a\'{s}nicki and
                  Marek Cygan and Piotr Sankowski and Kamil Adamczewski and
                  Piotr Mi{\l}o\'{s} and Sebastian Jaszczur},
  booktitle    = {Proceedings of the 42nd International Conference on Machine Learning (ICML)},
  pages        = {41056--41073},
  year         = {2025},
  series       = {Proceedings of Machine Learning Research},
  volume       = {267},
  publisher    = {PMLR},
  url          = {https://proceedings.mlr.press/v267/ludziejewski25a.html},
  doi          = {10.48550/ARXIV.2502.05172},
  note         = {Also available as arXiv:2502.05172},
}

@article{shazeer2017outrageously,
  title   = {Outrageously Large Neural Networks: The Sparsely-Gated Mixture-of-Experts Layer},
  author  = {Shazeer, Noam and Mirhoseini, Azalia and Maziarz, Krzysztof and Davis, Andy and Le, Quoc and Hinton, Geoffrey and Dean, Jeff},
  journal = {arXiv preprint arXiv:1701.06538},
  year    = {2017}
}

@article{komatsuzaki2022sparse,
  title   = {Sparse Upcycling: Training Mixture-of-Experts from Dense Checkpoints},
  author  = {Komatsuzaki, Aran and Puigcerver, Joan and Lee-Thorp, James and Riquelme Ruiz, Carlos and Mustafa, Basil and Ainslie, Joshua and Tay, Yi and Dehghani, Mostafa and Houlsby, Neil},
  journal = {arXiv preprint arXiv:2212.05055},
  year    = {2022}
}

@article{nakamura2025dropupcycling,
  title   = {Drop-Upcycling: Training Sparse Mixture of Experts with Partial Re-initialization},
  author  = {Nakamura, Taishi and Akiba, Takuya and Fujii, Kazuki and Oda, Yusuke and Yokota, Rio and Suzuki, Jun},
  journal = {arXiv preprint arXiv:2502.19261},
  year    = {2025}
}

@article{krajewski2024finegrained,
  title   = {Scaling Laws for Fine-Grained Mixture of Experts},
  author  = {Krajewski, Jakub and Ludziejewski, Jan and Adamczewski, Kamil and Pi{\'o}ro, Maciej and Krutul, Micha{\l} and Antoniak, Szymon and Ciebiera, Kamil and Kr{\'o}l, Krystian and Odrzyg{\'o}{\'z}d{\'z}, Tomasz and Sankowski, Piotr and Cygan, Marek and Jaszczur, Sebastian},
  journal = {arXiv preprint arXiv:2402.07871},
  year    = {2024}
}

@article{tian2025greaterleverage,
  title   = {Towards Greater Leverage: Scaling Laws for Efficient Mixture-of-Experts Language Models},
  author  = {Tian, Changxin and Chen, Kunlong and Liu, Jia and Liu, Ziqi and Zhang, Zhiqiang and Zhou, Jun},
  journal = {arXiv preprint arXiv:2507.17702},
  year    = {2025}
}

@article{lu2024experts,
  title   = {Not All Experts are Equal: Efficient Expert Pruning and Skipping for Mixture-of-Experts Large Language Models},
  author  = {Lu, Xudong and Liu, Qi and Xu, Yuhui and Zhou, Aojun and Huang, Siyuan and Zhang, Bo and Yan, Junchi and Li, Hongsheng},
  journal = {arXiv preprint arXiv:2402.14800},
  year    = {2024}
}

@inproceedings{li2025fishersforfree,
  title     = {Fishers for Free? Approximating the Fisher Information Matrix by Recycling the Squared Gradient Accumulator},
  author    = {Li, Yu Xin and Dangel, Felix and Tam, Derek and Raffel, Colin},
  booktitle = {Proceedings of the 42nd International Conference on Machine Learning (ICML)},
  pages     = {34252--34270},
  year      = {2025},
  volume    = {267},
  series    = {Proceedings of Machine Learning Research},
  publisher = {PMLR}
}

@inproceedings{Chen2016Net2Net,
  title     = {Net2Net: Accelerating Learning via Knowledge Transfer},
  author    = {Chen, Tianqi and Goodfellow, Ian and Shlens, Jonathon},
  booktitle = {International Conference on Learning Representations (ICLR)},
  year      = {2016},
  url       = {https://arxiv.org/abs/1511.05641}
}

@inproceedings{Lepikhin2020GShard,
  title     = {GShard: Scaling Giant Models with Conditional Computation and Automatic Sharding},
  author    = {Lepikhin, Dmitry and Lee, HyoukJoong and Xu, Yuanzhong and Chen, Dehao and Firat, Orhan and Huang, Yanping and Krikun, Maxim and Shazeer, Noam and Chen, Zhifeng},
  booktitle = {International Conference on Learning Representations (ICLR)},
  year      = {2021},
  url       = {https://arxiv.org/abs/2006.16668}
}

@article{fedus2022switch,
  title   = {Switch Transformers: Scaling to Trillion Parameter Models with Simple and Efficient Sparsity},
  author  = {Fedus, William and Zoph, Barret and Shazeer, Noam},
  journal = {Journal of Machine Learning Research (JMLR)},
  volume  = {23},
  number  = {120},
  pages   = {1--39},
  year    = {2022},
  url     = {https://jmlr.org/papers/v23/21-0998.html}
}

@inproceedings{hassibi1992second,
  title={Second order derivatives for network pruning: Optimal brain surgeon},
  author={Hassibi, Babak and Stork, David G},
  booktitle={Advances in Neural Information Processing Systems (NeurIPS)},
  volume={5},
  pages={164--171},
  year={1992}
}

@inproceedings{lecun1989optimal,
  title={Optimal brain damage},
  author={LeCun, Yann and Denker, John and Solla, Sara},
  booktitle={Advances in Neural Information Processing Systems (NeurIPS)},
  volume={2},
  pages={598--605},
  year={1989}
}

@article{wang2024auxiliary,
  title   = {Auxiliary-Loss-Free Load Balancing Strategy for Mixture-of-Experts},
  author  = {Wang, Lean and Gao, Huazuo and Zhao, Chenggang and Sun, Xu and Dai, Damai},
  journal = {arXiv preprint arXiv:2408.15664},
  year    = {2024}
}

@misc{bu2025dpt,
  title         = {Deep Progressive Training: scaling up depth capacity of zero/one-layer models},
  author        = {Bu, Zhiqi},
  year          = {2025},
  eprint        = {2511.04981},
  archivePrefix = {arXiv},
  primaryClass  = {cs.LG},
  url           = {https://arxiv.org/abs/2511.04981}
}

@misc{agarwal2024stacking,
  title         = {Stacking as Accelerated Gradient Descent},
  author        = {Agarwal, Naman and Awasthi, Pranjal and Kale, Satyen and Zhao, Eric},
  year          = {2024},
  eprint        = {2403.04978},
  archivePrefix = {arXiv},
  primaryClass  = {cs.LG},
  url           = {https://arxiv.org/abs/2403.04978}
}

@inproceedings{molchanov2017taylor,
  title={Pruning Convolutional Neural Networks for Resource Efficient Inference},
  author={Molchanov, Pavlo and Tyree, Stephen and Karras, Tero and Aila, Timo and Kautz, Jan},
  booktitle={International Conference on Learning Representations (ICLR)},
  year={2017}
}

@inproceedings{soen2024tradeoffs,
  title     = {Trade-Offs of Diagonal Fisher Information Matrix Estimators},
  author    = {Alexander Soen and Ke Sun},
  booktitle = {Advances in Neural Information Processing Systems (NeurIPS)},
  year      = {2024},
  eprint    = {2402.05379},
  archivePrefix = {arXiv},
  primaryClass  = {cs.LG}
}

@article{raposo2024mixtureofdepths,
  title   = {Mixture-of-Depths: Dynamically Allocating Compute in Transformer-Based Language Models},
  author  = {Raposo, David and Ritter, Sam and Richards, Blake and Lillicrap, Timothy and Humphreys, Peter Conway and Santoro, Adam},
  journal = {arXiv preprint arXiv:2404.02258},
  year    = {2024},
}

@article{kirkpatrick2017ewc,
  title   = {Overcoming Catastrophic Forgetting in Neural Networks},
  author  = {Kirkpatrick, James and Pascanu, Razvan and Rabinowitz, Neil and Veness, Joel and Desjardins, Guillaume and Rusu, Andrei A and Milan, Kieran and Quan, John and Ramalho, Tiago and Grabska-Barwinska, Agnieszka and others},
  journal = {Proceedings of the National Academy of Sciences},
  volume  = {114},
  number  = {13},
  pages   = {3521--3526},
  year    = {2017},
}

@article{gupta2023continual,
  title   = {Continual Pre-Training of Large Language Models: How to (re)warm your model?},
  author  = {Gupta, Kshitij and Th{\'e}rien, Benjamin and Ibrahim, Adam and Richter, Mats L. and Anthony, Quentin and Belilovsky, Eugene and Rish, Irina and Lesort, Timoth{\'e}e},
  journal = {arXiv preprint arXiv:2308.04014},
  year    = {2023}
}

@article{glm45,
  title   = {{GLM-4.5}: Agentic, Reasoning, and Coding ({ARC}) Foundation Models},
  author  = {{GLM-4.5 Team}},
  journal = {arXiv preprint arXiv:2508.06471},
  year    = {2025},
  url     = {https://arxiv.org/abs/2508.06471},
}

@article{jiang2024mixtral,
  title={Mixtral of experts},
  author={Jiang, Albert Q and Sablayrolles, Alexandre and Roux, Antoine and Mensch, Arthur and Savary, Blanche and Bamford, Chris and Chaplot, Devendra Singh and Casas, Diego de las and Hanna, Emma Bou and Bressand, Florian and others},
  journal={arXiv preprint arXiv:2401.04088},
  year={2024}
}

@misc{li2024datacomplm,
  title         = {DataComp-LM: In search of the next generation of training sets for language models},
  author        = {Jeffrey Li and Alex Fang and Georgios Smyrnis and Maor Ivgi and Matt Jordan and Samir Gadre and Hritik Bansal and Etash Guha and Sedrick Keh and Kushal Arora and Saurabh Garg and Rui Xin and Niklas Muennighoff and Reinhard Heckel and Jean Mercat and Mayee Chen and Suchin Gururangan and Mitchell Wortsman and Alon Albalak and Yonatan Bitton and Marianna Nezhurina and Amro Abbas and Cheng-Yu Hsieh and Dhruba Ghosh and Josh Gardner and Maciej Kilian and Hanlin Zhang and Rulin Shao and Sarah Pratt and Sunny Sanyal and Gabriel Ilharco and Giannis Daras and Kalyani Marathe and Aaron Gokaslan and Jieyu Zhang and Khyathi Chandu and Thao Nguyen and Igor Vasiljevic and Sham Kakade and Shuran Song and Sujay Sanghavi and Fartash Faghri and Sewoong Oh and Luke Zettlemoyer and Kyle Lo and Alaaeldin El-Nouby and Hadi Pouransari and Alexander Toshev and Stephanie Wang and Dirk Groeneveld and Luca Soldaini and Pang Wei Koh and Jenia Jitsev and Thomas Kollar and Alexandros G. Dimakis and Yair Carmon and Achal Dave and Ludwig Schmidt and Vaishaal Shankar},
  year          = {2024},
  eprint        = {2406.11794},
  archivePrefix = {arXiv},
  primaryClass  = {cs.LG},
  url           = {https://arxiv.org/abs/2406.11794}
}

@article{du2022glam,
  title   = {{GLaM}: Efficient Scaling of Language Models with Mixture-of-Experts},
  author  = {Du, Nan and Huang, Yanping and Dai, Andrew M. and Tong, Simon and Lepikhin, Dmitry and Xu, Yuanzhong and Krikun, Maxim and Zhou, Yanqi and Yu, Adams Wei and Firat, Orhan and Zoph, Barret and Fedus, Liam and Bosma, Maarten and Zhou, Zongwei and Wang, Tao and Wang, Yu Emma and Webster, Kellie and Pellat, Marie and Robinson, Kevin and Meier-Hellstern, Kathleen and Duke, Toju and Dixon, Lucas and Zhang, Kun and Le, Quoc V. and Wu, Yonghui and Chen, Zhifeng and Cui, Claire},
  journal = {arXiv preprint arXiv:2112.06905},
  year    = {2022}
}

@inproceedings{du2024stackingtransformers,
  title     = {Stacking Your Transformers: A Closer Look at Model Growth for Efficient {LLM} Pre-Training},
  author    = {Du, Wenyu and Luo, Tongxu and Qiu, Zihan and Huang, Zeyu and Shen, Yikang and Cheng, Reynold and Guo, Yike and Fu, Jie},
  booktitle = {Advances in Neural Information Processing Systems (NeurIPS)},
  year      = {2024}
}

@article{abnar2025optimalsparsity,
  title   = {Parameters vs {FLOPs}: Scaling Laws for Optimal Sparsity for Mixture-of-Experts Language Models},
  author  = {Abnar, Samira and Shah, Harshay and Busbridge, Dan and Ali, Alaaeldin Mohamed Elnouby and Susskind, Josh and Thilak, Vimal},
  journal = {arXiv preprint arXiv:2501.12370},
  year    = {2025}
}

@article{liew2025scalinglaws,
  title   = {Scaling Laws for Upcycling Mixture-of-Experts Language Models},
  author  = {Liew, Seng Pei and Kato, Takuya and Takase, Sho},
  journal = {arXiv preprint arXiv:2502.03009},
  year    = {2025}
}

@article{he2024upcyclingllm,
  title   = {Upcycling Large Language Models into Mixture of Experts},
  author  = {He, Ethan and Khattar, Abhinav and Prenger, Ryan and Korthikanti, Vijay and Yan, Zijie and Liu, Tong and Fan, Shiqing and Aithal, Ashwath and Shoeybi, Mohammad and Catanzaro, Bryan},
  journal = {arXiv preprint arXiv:2410.07524},
  year    = {2024}
}

@article{zhang2024bam,
  title   = {{BAM!} Just Like That: Simple and Efficient Parameter Upcycling for Mixture of Experts},
  author  = {Zhang, Qizhen and Gritsch, Nikolas and Gnaneshwar, Dwaraknath and Guo, Simon and Cairuz, David and Venkitesh, Bharat and Foerster, Jakob and Blunsom, Phil and Ruder, Sebastian and Ustun, Ahmet and Locatelli, Acyr},
  journal = {arXiv preprint arXiv:2408.08274},
  year    = {2024}
}

@article{muennighoff2023dataconstrained,
  title   = {Scaling Data-Constrained Language Models},
  author  = {Muennighoff, Niklas and Rush, Alexander M. and Barak, Boaz and Le Scao, Teven and Piktus, Aleksandra and Tazi, Nouamane and Pyysalo, Sampo and Wolf, Thomas and Raffel, Colin},
  journal = {arXiv preprint arXiv:2305.16264},
  year    = {2023}
}

@inproceedings{chi2022representationcollapse,
  title     = {On the Representation Collapse of Sparse Mixture of Experts},
  author    = {Chi, Zewen and Dong, Li and Huang, Shaohan and Dai, Damai and Ma, Shuming and Patra, Barun and Singhal, Saksham and Bajaj, Payal and Song, Xia and Mao, Xian-Ling and Huang, Heyan and Wei, Furu},
  booktitle = {Advances in Neural Information Processing Systems (NeurIPS)},
  year      = {2022}
}

@article{sukhbaatar2024btx,
  title   = {Branch-Train-{MiX}: Mixing Expert {LLMs} into a Mixture-of-Experts {LLM}},
  author  = {Sukhbaatar, Sainbayar and Golovneva, Olga and Sharma, Vasu and Xu, Hu and Lin, Xi Victoria and Rozi{\`e}re, Baptiste and Kahn, Jacob and Li, Daniel and Yih, Wen-tau and Weston, Jason and Li, Xian},
  journal = {arXiv preprint arXiv:2403.07816},
  year    = {2024}
}

@article{parmar2024reuse,
  title   = {Reuse, Don't Retrain: A Recipe for Continued Pretraining of Language Models},
  author  = {Parmar, Jupinder and Satheesh, Sanjev and Patwary, Mostofa and Shoeybi, Mohammad and Catanzaro, Bryan},
  journal = {arXiv preprint arXiv:2407.07263},
  year    = {2024}
}

@article{panigrahi2024raptr,
  title   = {Efficient Stagewise Pretraining via Progressive Subnetworks},
  author  = {Panigrahi, Abhishek and Saunshi, Nikunj and Lyu, Kaifeng and Miryoosefi, Sobhan and Reddi, Sashank and Kale, Satyen and Kumar, Sanjiv},
  journal = {arXiv preprint arXiv:2402.05913},
  year    = {2024}
}

@inproceedings{pan2023multilinear,
  title     = {Reusing Pretrained Models by Multi-linear Operators for Efficient Training},
  author    = {Pan, Yu and Yuan, Ye and Yin, Yichun and Xu, Zenglin and Shang, Lifeng and Jiang, Xin and Liu, Qun},
  booktitle = {Advances in Neural Information Processing Systems (NeurIPS)},
  year      = {2023}
}

@article{huang2025ders,
  title   = {{DeRS}: Towards Extremely Efficient Upcycled Mixture-of-Experts Models},
  author  = {Huang, Yongqi and Ye, Peng and Huang, Chenyu and Cao, Jianjian and Zhang, Lin and Li, Baopu and Yu, Gang and Chen, Tao},
  journal = {arXiv preprint arXiv:2503.01359},
  year    = {2025},
  note    = {Accepted at CVPR 2025}
}

@article{wu2025grovemoe,
  title   = {Grove {MoE}: Towards Efficient and Superior {MoE} {LLMs} with Adjugate Experts},
  author  = {Wu, Haoyuan and Chen, Haoxing and Chen, Xiaodong and Zhou, Zhanchao and Chen, Tieyuan and Zhuang, Yihong and Lu, Guoshan and Huang, Zenan and Zhao, Junbo and Liu, Lin and Lan, Zhenzhong and Yu, Bei and Li, Jianguo},
  journal = {arXiv preprint arXiv:2508.07785},
  year    = {2025}
}

@inproceedings{gritsch2025nexus,
  title     = {Nexus: Adaptive Upcycling to Efficiently Pretrain Mixture of Experts},
  author    = {Gritsch, Nikolas and Zhang, Qizhen and Locatelli, Acyr and Hooker, Sara and {\"U}st{\"u}n, Ahmet},
  booktitle = {Findings of the Association for Computational Linguistics: EMNLP 2025},
  pages     = {24364--24381},
  year      = {2025},
  publisher = {Association for Computational Linguistics}
}

@inproceedings{zhang2025layermoe,
  title     = {Less, but Better: Efficient Multilingual Expansion for {LLM}s via Layer-wise Mixture-of-Experts},
  author    = {Zhang, Xue and Chen, Yunlong and Liu, Tong and Wang, Cong and Liu, Mo and Huang, Hongji and Li, Yihan},
  booktitle = {Proceedings of the 63rd Annual Meeting of the Association for Computational Linguistics (ACL)},
  year      = {2025},
  note      = {arXiv:2505.22582}
}

@inproceedings{zhang2025bts,
  title     = {{BTS}: Harmonizing Specialized Experts into a Generalist {LLM}},
  author    = {Zhang, Qizhen and Bhargava, Prajjwal and Bi, Chloe and Cai, Chris X. and Foerster, Jakob Nicolaus and Fu, Jeremy and Koura, Punit Singh and Silva, Ruan and Shen, Sheng and Dinan, Emily and Gururangan, Suchin and Lewis, Mike},
  booktitle = {Proceedings of the 2025 Conference on Empirical Methods in Natural Language Processing (EMNLP)},
  pages     = {6816--6834},
  year      = {2025},
  note      = {arXiv:2502.00075}
}

@inproceedings{jiang2025improved,
  title     = {Improved Sparse Upcycling for Instruction Tuning},
  author    = {Jiang, Wangyi and Lu, Yaojie and Lin, Hongyu and Han, Xianpei and Sun, Le},
  booktitle = {Proceedings of the 31st International Conference on Computational Linguistics (COLING)},
  pages     = {9485--9498},
  year      = {2025}
}

@article{wang2026symphonymoe,
  title={Symphony-MoE: Harmonizing Disparate Pre-trained Models into a Coherent Mixture-of-Experts},
  author={Wang, Qi and Peng, Hanyang and Yu, Yue},
  journal={Proceedings of the AAAI Conference on Artificial Intelligence},
  year={2026},
  note={arXiv preprint arXiv:2509.18542},
}

@article{fan2024empiricalmoe,
  title={Towards an Empirical Understanding of MoE Design Choices},
  author={Fan, Dongyang and Messmer, Bettina and Jaggi, Martin},
  journal={arXiv preprint arXiv:2402.13089},
  year={2024},
}

@article{deepseekai2024deepseekv3,
  title={DeepSeek-V3 Technical Report},
  author={{DeepSeek-AI}},
  journal={arXiv preprint arXiv:2412.19437},
  year={2024},
}

@article{li2025slimmoe,
  title={SlimMoE: Structured Compression of Large MoE Models via Expert Slimming and Distillation},
  author={Li, Zichong and Liang, Chen and Zhang, Zixuan and Hong, Ilgee and Kim, Young Jin and Chen, Weizhu and Zhao, Tuo},
  journal={arXiv preprint arXiv:2506.18349},
  year={2025},
}

@inproceedings{zinkevich2003online,
  title={Online Convex Programming and Generalized Infinitesimal Gradient Ascent},
  author={Zinkevich, Martin},
  booktitle={Proceedings of the 20th International Conference on Machine Learning (ICML)},
  pages={928--936},
  year={2003}
}

@article{shalev2012online,
  title={Online Learning and Online Convex Optimization},
  author={Shalev-Shwartz, Shai},
  journal={Foundations and Trends in Machine Learning},
  volume={4},
  number={2},
  pages={107--194},
  year={2012}
}

@book{hazan2016introduction,
  title={Introduction to Online Convex Optimization},
  author={Hazan, Elad},
  year={2016},
  publisher={MIT Press},
  edition={2nd}
}

@inproceedings{schaipp2025surprising,
  title={The Surprising Agreement Between Convex Optimization Theory and Learning-Rate Scheduling for Large Model Training},
  author={Schaipp, Fabian and Defazio, Aaron and Mehta, Harsh and Mishchenko, Konstantin and Khaled, Ahmed},
  booktitle={Proceedings of the 42nd International Conference on Machine Learning (ICML)},
  year={2025}
}

@article{bu2026convex,
  title   = {Convex Dominance in Deep Learning {I}: A Scaling Law of Loss and Learning Rate},
  author  = {Bu, Zhiqi and Xu, Shiyun and Mao, Jialin},
  journal = {arXiv preprint arXiv:2602.07145},
  year    = {2026},
  note    = {Accepted to ICLR 2026}
}

@misc{metalllama4,
  author = {{Meta AI}},
  title = {Llama 4: Natively Multimodal Foundation Models},
  year = {2025},
  howpublished = {\url{https://github.com/meta-llama/llama-models}},
  note = {Model card available at \url{https://github.com/meta-llama/llama-models/blob/main/models/llama4/MODEL_CARD.md}}
}

@article{du2024revisitingmoe,
  title   = {Revisiting {MoE} and Dense Speed-Accuracy Comparisons for {LLM} Training},
  author  = {Du, Xianzhi and Gunter, Tom and Kong, Xiang and Lee, Mark and Wang, Zirui and Zhang, Aonan and Du, Nan and Pang, Ruoming},
  journal = {arXiv preprint arXiv:2405.15052},
  year    = {2024}
}

@article{jin2025megascalemoe,
  title   = {{MegaScale-MoE}: Large-Scale Communication-Efficient Training of Mixture-of-Experts Models in Production},
  author  = {Jin, Chao and Jiang, Ziheng and Bai, Zhihao and Zhong, Zheng and Liu, Juncai and Li, Xiang and Zheng, Ningxin and Wang, Xi and Xie, Cong and Huang, Qi and Heng, Wen and Ma, Yiyuan and Bao, Wenlei and Zheng, Size and Peng, Yanghua and Lin, Haibin and Liu, Xuanzhe and Jin, Xin and Liu, Xin},
  journal = {arXiv preprint arXiv:2505.11432},
  year    = {2025}
}

@article{kimik2,
  title={Kimi K2: Open Agentic Intelligence},
  author={Moonshot-AI},
  year={2025},
  journal={Technical Report},
  url={https://moonshotai.github.io/Kimi-K2/}
}

@article{qwen3,
  title={Qwen3 Technical Report},
  author={Qwen Team},
  year={2025},
  journal={arXiv preprint arXiv:2505.09388}
}

@article{zadouri2023moe,
  title={Pushing Mixture of Experts to the Limit: Extremely Parameter Efficient MoE for Instruction Tuning},
  author={Zadouri, Ted and {\"U}st{\"u}n, Ahmet and Ahmadian, Arash and Ermi{\c{s}}, Beyza and Locatelli, Acyr and Hooker, Sara},
  journal={arXiv preprint arXiv:2309.05444},
  year={2023}
}

@inproceedings{nie2024lshmoe,
  title={{LSH-MoE}: Communication-efficient {MoE} Training via Locality-Sensitive Hashing},
  author={Nie, Xiaonan and Liu, Qibin and Fu, Fangcheng and Zhu, Shenhan and Miao, Xupeng and Li, Xiaoyang and Zhang, Yang and Liu, Shouda and Cui, Bin},
  booktitle={Advances in Neural Information Processing Systems},
  volume={37},
  year={2024}
}

@article{yu2026sparkling,
  title={{SPARKLING}: Balancing Signal Preservation and Symmetry Breaking for Width-Progressive Learning},
  author={Yu, Qifan and Ma, Xinyu and Zhuo, Zhijian and Wang, Minrui and Liu, Deyi and Zhan, Shiyi and Ma, Yiyuan and Xiang, Liang and Bin, Xingyan and He, Di},
  journal={arXiv preprint arXiv:2602.02472},
  year={2026}
}

@article{wei2022emergent,
  title={Emergent abilities of large language models},
  author={Wei, Jason and Tay, Yi and Bommasani, Rishi and Raffel, Colin and Zoph, Barret and Borgeaud, Sebastian and Yogatama, Dani and Bosma, Maarten and Zhou, Denny and Metzler, Donald and Chi, Ed H. and Hashimoto, Tatsunori and Vinyals, Oriol and Liang, Percy and Dean, Jeff and Fedus, William},
  journal={Transactions on Machine Learning Research},
  year={2022}
}
}

\appendix

\clearpage

\section*{Appendix}

\vspace{0.5em}
\noindent\textbf{Contents}
\vspace{0.3em}

{
\hypersetup{hidelinks}
\renewcommand{\arraystretch}{1.15}
\noindent
\begin{tabular}{@{}p{0.82\linewidth}@{}r@{}}
Regret-bound decomposition for expert upcycling & \ref{app:proof} \\
Theoretical Justification for Gradient-Based Utility Scores & \ref{app:utility_theory} \\
Model Configurations & \ref{app:model_configs} \\
Heuristic Upcycling Methods and Results & \ref{app:heuristic_upcycling_methods} \\
Extended Related Work & \ref{app:extended_related_work} \\
\end{tabular}
}

\vspace{1em}

\section{Regret-bound decomposition for expert upcycling}
\label{app:proof}

Using the online convex optimization (OCO) framework~\citep{zinkevich2003online, shalev2012online, hazan2016introduction}, we adapt the regret-telescoping approach of~\citet{bu2025dpt} to formalize the two-term decomposition introduced in \S\ref{subsec:operator}.

\subsection{Setup}

\paragraph{Notation.}
Let $z \sim \mathcal{D}$ denote tokens from the pretraining distribution and $\ell(\cdot)$ the token-level cross-entropy loss.  For an MoE Transformer with top-$K$ routing, define the population objective for a model with $n$ experts as $\mathcal{L}_n(\theta) := \mathbb{E}_{z \sim \mathcal{D}}[\ell(f_n(z;\theta))]$, where $\theta$ collects all parameters and $\Theta_n$ denotes the parameter space.  Write $\mathcal{L}_E$ for the $E$-expert objective, $\mathcal{L}_{mE}$ for the expanded $mE$-expert objective, and $\mathcal{L}_n^\star = \min_{\theta \in \Theta_n} \mathcal{L}_n(\theta)$.  Partition the $mE$-expert parameter vector as $\theta = (\theta_s, \theta_+)$, where $\theta_s$ denotes parameters shared with the $E$-expert model (dense layers, embeddings, original experts) and $\theta_+$ denotes the degrees of freedom introduced by expansion (replica experts, expanded router weights).  Let $\theta_{mE}^\star = (\theta_s^\star, \theta_+^\star) \in \arg\min \mathcal{L}_{mE}$, and let $\theta_+^{U}$ and $\theta_+^{0}$ denote the new-parameter initialization at step~$\tau$ produced by the upcycling operator (Definition~\ref{def:upcycling_operator}) and the random initialization used by the fixed-size procedure, respectively.  We compare two procedures over $T$ total gradient steps with learning-rate schedule $\{\eta_t\}_{t=0}^{T-1}$: (i)~\emph{expert upcycling}: train an $E$-expert model for $\tau$ steps, expand to $mE$ experts, continue for $T - \tau$ steps; (ii)~\emph{fixed-size}: train an $mE$-expert model for all $T$ steps from random initialization.

\paragraph{Assumptions.}
\begin{assumption}[Convexity]
\label{asm:convex}
$\mathcal{L}_{mE}(\theta)$ is convex in $\theta$.
\end{assumption}
\begin{assumption}[Bounded gradients]
\label{asm:bounded_grad}
$\|\nabla\mathcal{L}_n(\theta)\|_2 \le G$ for all $\theta$ and $n \in \{E, mE\}$.
\end{assumption}
These are standard in the OCO literature \citep{zinkevich2003online, hazan2016introduction}.  They do not hold literally for deep networks; we adopt them to derive structural insights rather than tight numerical bounds.  Recent work shows that convex optimization theory is surprisingly predictive of large-scale training dynamics despite non-convexity \citep{schaipp2025surprising, bu2026convex}.

\paragraph{Canonical lifting.}
\begin{definition}[Canonical lifting]
\label{def:canonical_lifting}
The \emph{canonical lifting} $\iota: \Theta_E \to \Theta_{mE}$ retains the original $E$ experts and router unchanged, sets the extra $(m{-}1)E$ expert weights to zero, and sets their router logits to a sufficiently large negative constant $-M$ (chosen so the extra experts are never selected by top-$K$; because top-$K$ is a discrete hard threshold, any finite $M$ strictly larger than the maximum attained logit gap suffices).  Since the extra experts are never selected by top-$K$:
\begin{enumerate}[nosep,label=(\alph*)]
  \item $\mathcal{L}_{mE}(\iota(\theta_E)) = \mathcal{L}_E(\theta_E)$ for all $\theta_E$ \hfill (loss preservation),
  \item $\nabla_{\theta_+}\mathcal{L}_{mE}(\iota(\theta_E)) = \mathbf{0}$ \hfill (zero gradient on new parameters).
\end{enumerate}
Property~(b) holds because zero-weight experts with zero gating contribute nothing to the forward pass.  The lifting $\iota$ is a proof device: it lets us represent Phase~1 iterates in $\Theta_{mE}$ for the telescoping argument without changing the optimization dynamics.
\end{definition}

\subsection{Theorem and proof}

\begin{theorem}[Expert upcycling bound]
\label{thm:progressive_bound}
Let $\bar{L}^{\mathrm{up}}$ and $\bar{L}^{\mathrm{fs}}$ denote the learning-rate-weighted average training losses of the two procedures over the shared schedule $\{\eta_t\}_{t=0}^{T-1}$, and let $\mathcal{R}^{\mathrm{up}}$ and $\mathcal{R}^{\mathrm{fs}}$ denote their respective OCO regret upper bounds (defined in Step~3 of the derivation below):
\begin{equation*}
  \bar{L}^{\mathrm{up}} \;\le\; \mathcal{R}^{\mathrm{up}},
  \qquad
  \bar{L}^{\mathrm{fs}} \;\le\; \mathcal{R}^{\mathrm{fs}}.
\end{equation*}
Suppose both procedures share the same initial $\theta_s$ and that the shared components of their respective optima coincide.  Then under Assumptions~\ref{asm:convex}--\ref{asm:bounded_grad}, these regret bounds differ by:
\begin{equation}
  \mathcal{R}^{\mathrm{up}} - \mathcal{R}^{\mathrm{fs}}
  \;=\;
  \underbrace{
    \frac{\sum_{t=0}^{\tau-1}\eta_t}
         {\sum_{t=0}^{T-1}\eta_t}
    \bigl(\mathcal{L}_E^\star
          - \mathcal{L}_{mE}^\star\bigr)
  }_{\text{(I)\; capacity gap}}
  \;+\;
  \underbrace{
    \frac{\|\theta_+^{U} - \theta_+^\star\|^2
          - \|\theta_+^{0} - \theta_+^\star\|^2}
         {2\sum_{t=0}^{T-1}\eta_t}
  }_{\text{(II)\; initialization gain}}.
  \label{eq:main_bound}
\end{equation}
\end{theorem}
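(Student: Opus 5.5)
We follow the standard OCO regret argument for (sub)gradient descent with step sizes $\eta_t$, applied to both procedures with iterates represented in the common space $\Theta_{mE}$. For fixed-size training the iterates $\theta_t^{\mathrm{fs}}$ already live in $\Theta_{mE}$. For upcycling, the Phase~1 iterates are embedded through the canonical lifting $\iota$ of Definition~\ref{def:canonical_lifting}. By property~(b), a gradient step on $\mathcal{L}_E$ followed by lifting coincides with a gradient step on $\mathcal{L}_{mE}$ at the lifted point. By property~(a), the losses agree. At step $\tau$ the new block is reset from the lifted value to $\theta_+^{U}$, while $\theta_s$ is unchanged. For $t\ge\tau$, upcycling is plain gradient descent on $\mathcal{L}_{mE}$.

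\textbf{Step 1 (one-step inequality).} For any comparator $u$ and gradient step $\theta_{t+1}=\theta_t-\eta_t g_t$, expand $\|\theta_{t+1}-u\|^2$. Convexity (Assumption~\ref{asm:convex}) together with Assumption~\ref{asm:bounded_grad} then gives
\[
\eta_t\bigl(\mathcal{L}(\theta_t)-\mathcal{L}(u)\bigr)\le \tfrac12\bigl(\|\theta_t-u\|^2-\|\theta_{t+1}-u\|^2\bigr)+\tfrac12\eta_t^2G^2 .
\]

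\textbf{Step 2 (fixed-size bound).} Take $u=\theta_{mE}^\star$, sum over $t=0,\dots,T-1$, telescope, and divide by $\sum_t\eta_t$. This yields
\[
\mathcal{R}^{\mathrm{fs}}=\mathcal{L}_{mE}^\star+\frac{\|\theta_0-\theta^\star_{mE}\|^2+G^2\sum_t\eta_t^2}{2\sum_t\eta_t}.
\]
Split $\|\theta_0-\theta^\star\|^2=\|\theta_{s,0}-\theta_s^\star\|^2+\|\theta_+^0-\theta_+^\star\|^2$.

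\textbf{Step 3 (upcycling bound).} Phase~1 is handled in the $E$-expert problem with comparator $\theta_E^\star$, which gives a telescoped bound with the optimum $\mathcal{L}_E^\star$ weighted by $\sum_{t<\tau}\eta_t$. Phase~2 uses comparator $\theta^\star_{mE}$ and the optimum $\mathcal{L}_{mE}^\star$ weighted by $\sum_{t\ge\tau}\eta_t$.

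The gluing at $\tau$ is the delicate point. Phase~1 ends with the residual $-\tfrac12\|\theta_{E,\tau}-\theta_E^\star\|^2$, and Phase~2 begins with $+\tfrac12\|(\theta_{s,\tau},\theta_+^U)-\theta^\star_{mE}\|^2$. The hypothesis that the shared components of the optima coincide ($\theta_E^\star$ restricted to shared coordinates equals $\theta_s^\star$) makes the $\theta_s$-parts cancel exactly. What remains is $\tfrac12\|\theta_+^U-\theta_+^\star\|^2$. The initial Phase~1 distance in shared coordinates is $\|\theta_{s,0}-\theta_s^\star\|^2$, which matches the fixed-size term because both procedures share the initial $\theta_s$.

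To keep the statement an exact equality, I would \emph{define} $\mathcal{R}^{\mathrm{up}}$ and $\mathcal{R}^{\mathrm{fs}}$ as these telescoped right-hand sides. The variance terms $G^2\sum\eta_t^2/(2\sum\eta_t)$ are identical under the shared schedule, so they cancel in the difference.

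\textbf{Step 4 (subtract).} The optimum-weighted terms give
\[
\frac{\sum_{t<\tau}\eta_t\,\mathcal{L}_E^\star+\sum_{t\ge\tau}\eta_t\,\mathcal{L}_{mE}^\star}{\sum_t\eta_t}-\mathcal{L}_{mE}^\star ,
\]
which is exactly Term~(I). The distance terms leave $\bigl(\|\theta_+^U-\theta_+^\star\|^2-\|\theta_+^0-\theta_+^\star\|^2\bigr)/(2\sum_t\eta_t)$, which is Term~(II).

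\textbf{Main obstacle.} The main obstacle is the boundary accounting at $\tau$. One must ensure the Phase~1 residual distance and the Phase~2 starting distance cancel in $\theta_s$, rather than leaving a term like $\|\theta_{s,\tau}-\theta_s^\star\|^2$. This requires the coincident-optima assumption and dropping nonpositive residuals consistently in both procedures. I would handle it by bounding Phase~1 purely in $\Theta_E$ and keeping track of the lifted residual for $\theta_+$. That residual term, $\|\iota(\cdot)_+-\theta_+^\star\|^2$, is discarded at the reset because the reset replaces it with $\theta_+^U$.
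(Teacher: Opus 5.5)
Your proposal is correct and follows the paper's proof essentially step for step. It uses the same one-step OCO inequality and the same phase-wise telescoping with comparators $\theta_E^\star$ and $\theta_{mE}^\star$. It keeps the Phase-1 terminal distance so that its shared part cancels the shared part of the Phase-2 starting distance, using the coincident-optima and shared-initialization hypotheses, and drops only the final residuals. It then defines $\mathcal{R}^{\mathrm{up}}$ and $\mathcal{R}^{\mathrm{fs}}$ as the resulting right-hand sides and subtracts.

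The only nuance is that the paper runs Phase~1 on the lifted iterates with comparator $\iota(\theta_E^\star)$. That way only convexity of $\mathcal{L}_{mE}$ is used; working directly in $\Theta_E$, as you do, needs the remark that $\mathcal{L}_E=\mathcal{L}_{mE}\circ\iota$ with $\iota$ affine. In the lifted setting the new-coordinate residual is identically zero, so your closing remark about discarding $\|\iota(\cdot)_+-\theta_+^\star\|^2$ at the reset is unnecessary.
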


The proof proceeds in four steps.

\paragraph{Step 1: one-step descent lemma.}
\begin{lemma}[Gradient-descent regret inequality]
\label{lem:one_step}
Let $\mathcal{L}$ satisfy Assumptions~\ref{asm:convex}--\ref{asm:bounded_grad}.  For any comparator $\theta^\star$ and iterate $\theta_{t+1} = \theta_t - \eta_t\nabla\mathcal{L}(\theta_t)$:
\begin{equation}
  \eta_t\bigl(\mathcal{L}(\theta_t) - \mathcal{L}(\theta^\star)\bigr)
  \;\le\;
  \tfrac{1}{2}\bigl(\|\theta_t - \theta^\star\|^2 - \|\theta_{t+1} - \theta^\star\|^2\bigr)
  + \tfrac{1}{2}\eta_t^2 G^2.
  \label{eq:one_step}
\end{equation}
\end{lemma}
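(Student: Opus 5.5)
We prove the one-step inequality (Lemma~\ref{lem:one_step}) by the standard online-gradient-descent argument: expand the squared distance to the comparator after one step, then use convexity to turn the resulting inner product into a loss gap. The distance recursion is algebra, and Assumptions~\ref{asm:convex}--\ref{asm:bounded_grad} control the inner-product and step-size terms respectively.

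First, write $g_t := \nabla\mathcal{L}(\theta_t)$ and substitute the update $\theta_{t+1} = \theta_t - \eta_t g_t$ into the squared distance:
\begin{equation*}
  \|\theta_{t+1} - \theta^\star\|^2
  = \|\theta_t - \theta^\star\|^2
    - 2\eta_t \langle g_t, \theta_t - \theta^\star\rangle
    + \eta_t^2 \|g_t\|^2 .
\end{equation*}
Rearranging isolates the linearized regret term:
\begin{equation*}
  \eta_t\langle g_t, \theta_t - \theta^\star\rangle
  = \tfrac{1}{2}\bigl(\|\theta_t - \theta^\star\|^2 - \|\theta_{t+1} - \theta^\star\|^2\bigr)
    + \tfrac{1}{2}\eta_t^2\|g_t\|^2 .
\end{equation*}

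Second, convexity (Assumption~\ref{asm:convex}) gives the first-order lower bound $\mathcal{L}(\theta^\star) \ge \mathcal{L}(\theta_t) + \langle g_t, \theta^\star - \theta_t\rangle$. Equivalently, $\mathcal{L}(\theta_t) - \mathcal{L}(\theta^\star) \le \langle g_t, \theta_t - \theta^\star\rangle$.

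Third, multiply this by $\eta_t \ge 0$. The learning rates are assumed nonnegative, which is implicit in the schedule; this is the only sign condition needed. Substituting the identity above and bounding $\|g_t\|^2 \le G^2$ via Assumption~\ref{asm:bounded_grad} yields~\eqref{eq:one_step}.

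There is no real obstacle here; the argument is routine. The one point to check is that convexity is invoked for the objective whose gradient drives the update. In the theorem's application, Phase~1 iterates are embedded in $\Theta_{mE}$ via the canonical lifting $\iota$ (Definition~\ref{def:canonical_lifting}). Property~(b) ensures the $\Theta_{mE}$-gradient at lifted points matches the $E$-expert update with zero component on $\theta_+$, and property~(a) ensures the losses coincide. So the lemma is applied there to $\mathcal{L}_{mE}$, under Assumption~\ref{asm:convex}, along lifted iterates. The lemma itself, however, only needs a generic convex $\mathcal{L}$ with bounded gradients, and is stated as such.
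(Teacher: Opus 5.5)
Your proof is correct and follows the paper's argument exactly: expand $\|\theta_{t+1}-\theta^\star\|^2$, use the first-order convexity inequality to replace the inner product by the loss gap, and bound $\|\nabla\mathcal{L}(\theta_t)\|^2 \le G^2$. You also state explicitly that $\eta_t \ge 0$ is needed when multiplying the convexity inequality, which the paper leaves implicit.
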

\begin{proof}
Expand the squared distance after the update:
\begin{align}
  \|\theta_{t+1} - \theta^\star\|^2
  &= \|\theta_t - \theta^\star\|^2
     - 2\eta_t\langle \nabla\mathcal{L}(\theta_t),\, \theta_t - \theta^\star\rangle
     + \eta_t^2\|\nabla\mathcal{L}(\theta_t)\|^2.
  \label{eq:expand}
\end{align}
By convexity (Assumption~\ref{asm:convex}): $\langle \nabla\mathcal{L}(\theta_t),\, \theta_t - \theta^\star\rangle \ge \mathcal{L}(\theta_t) - \mathcal{L}(\theta^\star)$.  By the gradient bound (Assumption~\ref{asm:bounded_grad}): $\|\nabla\mathcal{L}(\theta_t)\|^2 \le G^2$.  Substituting into \eqref{eq:expand} and rearranging yields \eqref{eq:one_step}.  This is the standard OCO regret inequality for online gradient descent \citep[Theorem~1]{zinkevich2003online}.
\end{proof}

\paragraph{Step 2: phase-wise telescoping.}
\emph{Phase 1 ($t = 0, \ldots, \tau-1$): training in $\Theta_E$.}  We represent the upcycling iterates in $\Theta_{mE}$ via the lifting $\iota$ (Definition~\ref{def:canonical_lifting}).  Write $\tilde\theta_t = \iota(\theta_t^{\mathrm{up}})$ for $t \le \tau$.  By property~(a) of $\iota$, $\mathcal{L}_{mE}(\tilde\theta_t) = \mathcal{L}_E(\theta_t^{\mathrm{up}})$.  By property~(b), the new-parameter coordinates remain at zero throughout Phase~1, so the lifted iterates follow the same trajectory as the $E$-expert SGD.  Choose comparator $\iota(\theta_E^\star)$, where $\theta_E^\star \in \arg\min \mathcal{L}_E$.  Applying Lemma~\ref{lem:one_step} at each step $t = 0, \ldots, \tau-1$ and summing (intermediate distance terms telescope, leaving only boundary terms):
\begin{equation}
  \sum_{t=0}^{\tau-1}\eta_t\bigl(\mathcal{L}_E(\theta_t^{\mathrm{up}}) - \mathcal{L}_E^\star\bigr)
  \;\le\;
  \tfrac{1}{2}\|\tilde\theta_0 - \iota(\theta_E^\star)\|^2
  - \tfrac{1}{2}\|\tilde\theta_\tau - \iota(\theta_E^\star)\|^2
  + \tfrac{G^2}{2}\sum_{t=0}^{\tau-1}\eta_t^2.
  \label{eq:phase1}
\end{equation}
\emph{Phase 2 ($t = \tau, \ldots, T-1$): training in $\Theta_{mE}$.}  At step $\tau$, the upcycling operator is applied to the Phase-1 terminal iterate to obtain $\theta_\tau^{\mathrm{up}} \in \Theta_{mE}$.  From this point, iterates live in $\Theta_{mE}$ directly.  Choose comparator $\theta_{mE}^\star \in \arg\min \mathcal{L}_{mE}$.  Applying Lemma~\ref{lem:one_step} at each step $t = \tau, \ldots, T-1$ and summing:
\begin{equation}
  \sum_{t=\tau}^{T-1}\eta_t\bigl(\mathcal{L}_{mE}(\theta_t^{\mathrm{up}}) - \mathcal{L}_{mE}^\star\bigr)
  \;\le\;
  \tfrac{1}{2}\|\theta_\tau^{\mathrm{up}} - \theta_{mE}^\star\|^2
  - \tfrac{1}{2}\|\theta_T^{\mathrm{up}} - \theta_{mE}^\star\|^2
  + \tfrac{G^2}{2}\sum_{t=\tau}^{T-1}\eta_t^2.
  \label{eq:phase2}
\end{equation}

\paragraph{Step 3: combining phases.}
Adding \eqref{eq:phase1} and \eqref{eq:phase2}, and writing $L_t^{\mathrm{up}} = \mathcal{L}_E(\theta_t^{\mathrm{up}})$ for $t < \tau$ and $L_t^{\mathrm{up}} = \mathcal{L}_{mE}(\theta_t^{\mathrm{up}})$ for $t \ge \tau$, we obtain a bound on the $\eta$-weighted sum of losses.  Of the two terminal distance terms on the right side, the Phase-$T$ term $-\tfrac{1}{2}\|\theta_T^{\mathrm{up}} - \theta_{mE}^\star\|^2 \le 0$ is dropped (yielding a looser upper bound); the Phase-$1$ terminal $-\tfrac{1}{2}\|\tilde\theta_\tau - \iota(\theta_E^\star)\|^2$ is retained, since its shared-parameter component will cancel a matching term at the transition in Step~4.  This gives
\begin{multline}
  \sum_{t=0}^{T-1}\eta_t\, L_t^{\mathrm{up}}
  \;\le\;
  \Bigl(\sum_{t=0}^{\tau-1}\eta_t\Bigr)\mathcal{L}_E^\star + \Bigl(\sum_{t=\tau}^{T-1}\eta_t\Bigr)\mathcal{L}_{mE}^\star
  \;+\;
  \tfrac{G^2}{2}\sum_{t=0}^{T-1}\eta_t^2 \\
  \;+\;
  \tfrac{1}{2}\bigl(\|\tilde\theta_0 - \iota(\theta_E^\star)\|^2
      - \|\tilde\theta_\tau - \iota(\theta_E^\star)\|^2
      + \|\theta_\tau^{\mathrm{up}} - \theta_{mE}^\star\|^2\bigr).
  \label{eq:combined_before_div}
\end{multline}
Dividing by $\sum_{t=0}^{T-1}\eta_t$ gives $\bar{L}^{\mathrm{up}} \le \mathcal{R}^{\mathrm{up}}$, where $\bar{L}^{\mathrm{up}} := (\sum_t \eta_t L_t^{\mathrm{up}}) / \sum_t \eta_t$ is the learning-rate-weighted average loss and the \emph{upcycling regret upper bound} is defined as:
\begin{multline}
  \mathcal{R}^{\mathrm{up}}
  \;:=\;
  \frac{\bigl(\sum_{t=0}^{\tau-1}\eta_t\bigr)\mathcal{L}_E^\star + \bigl(\sum_{t=\tau}^{T-1}\eta_t\bigr)\mathcal{L}_{mE}^\star}{\sum_{t=0}^{T-1}\eta_t}
  \;+\;
  \frac{G^2\sum_{t=0}^{T-1}\eta_t^2}{2\sum_{t=0}^{T-1}\eta_t} \\
  \;+\;
  \frac{\|\tilde\theta_0 - \iota(\theta_E^\star)\|^2 - \|\tilde\theta_\tau - \iota(\theta_E^\star)\|^2 + \|\theta_\tau^{\mathrm{up}} - \theta_{mE}^\star\|^2}{2\sum_{t=0}^{T-1}\eta_t}.
  \label{eq:combined_up}
\end{multline}
For the fixed-size procedure, which trains in $\Theta_{mE}$ for all $T$ steps with comparator $\theta_{mE}^\star$, applying Lemma~\ref{lem:one_step}, dropping the non-positive terminal $-\tfrac{1}{2}\|\theta_T^{\mathrm{fs}} - \theta_{mE}^\star\|^2$, and dividing by $\sum_{t=0}^{T-1}\eta_t$ analogously gives $\bar{L}^{\mathrm{fs}} \le \mathcal{R}^{\mathrm{fs}}$, where the \emph{fixed-size regret upper bound} is defined as:
\begin{equation}
  \mathcal{R}^{\mathrm{fs}}
  \;:=\;
  \mathcal{L}_{mE}^\star
  \;+\;
  \frac{\|\theta_0^{\mathrm{fs}} - \theta_{mE}^\star\|^2}{2\sum_{t=0}^{T-1}\eta_t}
  \;+\;
  \frac{G^2\sum_{t=0}^{T-1}\eta_t^2}{2\sum_{t=0}^{T-1}\eta_t}.
  \label{eq:combined_fs}
\end{equation}

\paragraph{Step 4: difference of the regret upper bounds.}
We now compute $\mathcal{R}^{\mathrm{up}} - \mathcal{R}^{\mathrm{fs}}$ as an algebraic identity between the two explicit expressions \eqref{eq:combined_up} and \eqref{eq:combined_fs}.  This is a direct computation on the bounds themselves, not a bound on the loss gap $\bar{L}^{\mathrm{up}} - \bar{L}^{\mathrm{fs}}$.  Subtracting \eqref{eq:combined_fs} from \eqref{eq:combined_up} gives three groups of terms: a comparator-loss difference, a distance-term difference, and a $G^2$ contribution,
\begin{align}
  \mathcal{R}^{\mathrm{up}} - \mathcal{R}^{\mathrm{fs}}
  \;=\;&\;
  \underbrace{\frac{\bigl(\sum_{t=0}^{\tau-1}\eta_t\bigr)\mathcal{L}_E^\star + \bigl(\sum_{t=\tau}^{T-1}\eta_t\bigr)\mathcal{L}_{mE}^\star}{\sum_{t=0}^{T-1}\eta_t} - \mathcal{L}_{mE}^\star}_{\text{comparator-loss difference}} \notag \\
  &\;+\;
  \underbrace{\frac{\|\tilde\theta_0 - \iota(\theta_E^\star)\|^2 - \|\tilde\theta_\tau - \iota(\theta_E^\star)\|^2 + \|\theta_\tau^{\mathrm{up}} - \theta_{mE}^\star\|^2 - \|\theta_0^{\mathrm{fs}} - \theta_{mE}^\star\|^2}{2\sum_{t=0}^{T-1}\eta_t}}_{\text{distance-term difference}} \notag \\
  &\;+\;
  \underbrace{\frac{G^2\sum_{t=0}^{T-1}\eta_t^2}{2\sum_{t=0}^{T-1}\eta_t} - \frac{G^2\sum_{t=0}^{T-1}\eta_t^2}{2\sum_{t=0}^{T-1}\eta_t}}_{ = \; 0}.
  \label{eq:subtracted}
\end{align}
The $G^2$ terms are identical in both bounds (same schedule, same $G$) and cancel exactly.  For the comparator-loss difference, writing $\sum_{t=\tau}^{T-1}\eta_t = \sum_{t=0}^{T-1}\eta_t - \sum_{t=0}^{\tau-1}\eta_t$ and simplifying:
\begin{equation}
  \frac{\bigl(\sum_{t=0}^{\tau-1}\eta_t\bigr)\mathcal{L}_E^\star + \bigl(\sum_{t=\tau}^{T-1}\eta_t\bigr)\mathcal{L}_{mE}^\star}{\sum_{t=0}^{T-1}\eta_t} - \mathcal{L}_{mE}^\star
  \;=\;
  \frac{\sum_{t=0}^{\tau-1}\eta_t}{\sum_{t=0}^{T-1}\eta_t}\bigl(\mathcal{L}_E^\star - \mathcal{L}_{mE}^\star\bigr).
  \label{eq:comparator_simplified}
\end{equation}
For the distance-term difference, decompose $\theta = (\theta_s, \theta_+)$ so that $\|\theta - \theta^\star\|^2 = \|\theta_s - \theta_s^\star\|^2 + \|\theta_+ - \theta_+^\star\|^2$.  Both procedures use the same initial shared parameters ($\tilde\theta_{0,s} = \theta_{0,s}^{\mathrm{fs}}$), and assuming the shared components of $\theta_E^\star$ and $\theta_{mE}^\star$ coincide (the same simplification used in progressive depth expansion \citep{bu2025dpt}), $\iota(\theta_E^\star)_s = \theta_{mE,s}^\star$.  Under the lifting, both $\tilde\theta_0$ and $\iota(\theta_E^\star)$ have zero new-parameter coordinates, so
\begin{align*}
  \|\tilde\theta_0 - \iota(\theta_E^\star)\|^2
    &= \|\theta_{0,s}^{\mathrm{fs}} - \theta_{mE,s}^\star\|^2,
  &
  \|\theta_0^{\mathrm{fs}} - \theta_{mE}^\star\|^2
    &= \|\theta_{0,s}^{\mathrm{fs}} - \theta_{mE,s}^\star\|^2 + \|\theta_+^0 - \theta_+^\star\|^2,
\end{align*}
whose difference leaves $-\|\theta_+^0 - \theta_+^\star\|^2$.  At the transition, the upcycling iterate satisfies $\|\theta_\tau^{\mathrm{up}} - \theta_{mE}^\star\|^2 = \|\theta_{\tau,s}^{\mathrm{up}} - \theta_{mE,s}^\star\|^2 + \|\theta_+^U - \theta_+^\star\|^2$, and under the lifting $\|\tilde\theta_\tau - \iota(\theta_E^\star)\|^2 = \|\theta_{\tau,s}^{\mathrm{up}} - \theta_{mE,s}^\star\|^2$ (new coordinates still zero).  The retained Phase-1 terminal $-\|\tilde\theta_\tau - \iota(\theta_E^\star)\|^2$ exactly cancels the shared-parameter piece of $+\|\theta_\tau^{\mathrm{up}} - \theta_{mE}^\star\|^2$, leaving $+\|\theta_+^U - \theta_+^\star\|^2$.  Thus the distance-term difference reduces to
\begin{equation}
  \frac{\|\theta_+^U - \theta_+^\star\|^2 - \|\theta_+^0 - \theta_+^\star\|^2}{2\sum_{t=0}^{T-1}\eta_t}.
  \label{eq:distance_simplified}
\end{equation}
Substituting \eqref{eq:comparator_simplified} and \eqref{eq:distance_simplified} into \eqref{eq:subtracted} yields $\mathcal{R}^{\mathrm{up}} - \mathcal{R}^{\mathrm{fs}} = (\mathrm{I}) + (\mathrm{II})$, which is Eq.~\eqref{eq:main_bound} and establishes Theorem~\ref{thm:progressive_bound}.  \qed

\subsection{Discussion of assumptions}
\label{app:discussion_assumptions}

\paragraph{Convexity.}
The convex assumption does not hold for deep networks.  However, a growing body of evidence shows that convex optimization theory provides qualitatively accurate predictions for large-scale training: \citet{schaipp2025surprising} demonstrate that convex SGD bounds closely predict optimal learning-rate schedules for LLM pre-training; \citet{bu2026convex} show that optimization dynamics are ``empirically convex-like'' across diverse tasks and models.  We adopt the convex framework for structural insight, not tight numerical bounds.

\paragraph{Parameter decomposition.}
The assumption that the shared components of $\theta_E^\star$ and $\theta_{mE}^\star$ coincide is a simplification: adding experts may shift the optimal dense-layer parameters.  This is the same assumption used in progressive depth expansion \citep{bu2025dpt} and enables a clean decomposition.  For expert-count expansion with $m=2$, a natural choice is to designate the first copy of each expert as ``shared'' and the second as ``new.''

\clearpage
\section{Theoretical justification for gradient-based utility scores}
\label{app:utility_theory}

We derive the two utility scores used in Section~\ref{sec:utility_upcycling} from first principles, starting from a local approximation of the loss at transition time $\tau$.

\paragraph{Setup.}
Let $\theta_E^\tau \in \Theta_E$ denote the trained $E$-expert checkpoint at transition time $\tau$. Partition the parameters as $\theta = (w_1, \ldots, w_E, \phi)$, where $w_e \in \mathbb{R}^{d_e}$ are the parameters of expert $e$ and $\phi$ collects all remaining parameters (dense layers, embeddings, router). Write $\mathcal{L}(\theta) = \mathbb{E}_{z \sim \mathcal{D}}[\ell(f(z;\theta))]$ for the population loss. Let $g_e = \nabla_{w_e} \mathcal{L}(\theta_E^\tau)$ denote the gradient with respect to expert $e$'s parameters, evaluated at transition time $\tau$.

\paragraph{First-order loss expansion.}
Consider perturbing expert $e$'s parameters by $\Delta w_e$ while holding all other parameters fixed. A first-order Taylor expansion around $\theta_E^\tau$ gives:
\begin{equation}
\mathcal{L}(\theta_E^\tau + \Delta w_e \mathbf{1}_e) \;=\; \mathcal{L}(\theta_E^\tau) \;+\; g_e^\top \Delta w_e \;+\; O(\|\Delta w_e\|_2^2),
\label{eq:taylor_first}
\end{equation}
where $\mathbf{1}_e$ denotes the indicator that only expert $e$'s block is perturbed.

\paragraph{Worst-case sensitivity and Utility 1: $u_G(e) = \|g_e\|_2^2$.}
By the Cauchy--Schwarz inequality, $|g_e^\top \Delta w_e| \le \|g_e\|_2 \cdot \|\Delta w_e\|_2$, with equality when $\Delta w_e \propto g_e$. For a unit perturbation, the worst-case first-order loss change is $\|g_e\|_2$. Ranking experts by $\|g_e\|_2$ thus ranks them by how much the loss can change per unit perturbation. We use the square for convenience:
\begin{equation}
u_G(e) \;:=\; \|g_e\|_2^2.
\label{eq:utility_g2}
\end{equation}
An expert with large $u_G(e)$ is one where the loss landscape is steep: the objective is actively sensitive to this expert's parameters under the current data distribution and routing at time $\tau$. Replicating such an expert introduces new degrees of freedom precisely where the loss is most responsive, giving CPT the greatest opportunity to reduce the initialization gap term in Eq.~\eqref{eq:main_bound}.

\paragraph{Scale-aware sensitivity and Utility 2: $\|w_e\|_2 \cdot \|g_e\|_2$.}
The gradient norm $\|g_e\|_2$ is not scale-invariant: if expert $e$'s parameters are uniformly rescaled by $\alpha > 0$, the gradient scales as $g_e \to g_e / \alpha$, so $\|g_e\|_2$ decreases even though the expert's functional contribution is unchanged. This can cause $u_G$ to systematically underrank large-norm experts that are functionally important but whose gradients have been reduced by scale.

To correct for this, consider perturbing expert $e$'s parameters proportionally to their current magnitude, $\Delta w_e = \epsilon \cdot w_e$:
\begin{equation*}
\mathcal{L}(\theta_E^\tau + \epsilon w_e \mathbf{1}_e) - \mathcal{L}(\theta_E^\tau) \;\approx\; \epsilon \cdot g_e^\top w_e.
\end{equation*}
The absolute value is bounded by $|g_e^\top w_e| \le \|g_e\|_2 \cdot \|w_e\|_2$, tight when $g_e \propto w_e$. The product $\|w_e\|_2 \cdot \|g_e\|_2$ captures worst-case loss sensitivity under proportional perturbations. We define:
\begin{equation}
u_{\mathrm{SAL}}(e) \;:=\; \|w_e\|_2 \cdot \|g_e\|_2.
\label{eq:utility_sal}
\end{equation}
This is the weight-space analogue of the Taylor saliency criterion of \citet{molchanov2017taylor}, who derive $\Theta_{TE}(h_i) = |\frac{\partial \mathcal{C}}{\partial h_i} h_i|$ in activation space. The weight-space version $|g_{ij} w_{ij}|$ has been used as a structured pruning criterion in recent work \citep{li2025slimmoe}. Our criterion operates at the expert block level rather than individual scalars.

In practice, $u_G$ and $u_{\mathrm{SAL}}$ perform similarly to each other and both significantly outperform uniform copy-paste, suggesting that any gradient-based importance signal is more informative than treating all experts as equally valuable replication targets.

\paragraph{Why not second-order?}
The second-order term in Eq.~\eqref{eq:taylor_first} involves the Hessian block $H_e = \nabla^2_{w_e} \mathcal{L}(\theta_E^\tau)$. Curvature-normalized scores such as $g_e^\top H_e^{-1} g_e$ \citep{hassibi1992second} are theoretically more precise but require estimating $H_e$, which is expensive and noisy in practice. Diagonal Fisher approximations introduce significant bias \citep{soen2024tradeoffs}, and in our experiments curvature-normalized variants did not outperform the first-order scores. We therefore use $u_G$ and $u_{\mathrm{SAL}}$ as our primary utilities.

\section{Model configurations}
\label{app:model_configs}

This appendix provides the complete model configurations used across all experiments (see \S~\ref{sec:setup} for the experimental setup). Table~\ref{tab:model_configs_interleaved} details the interleaved MoE architecture configuration, with corresponding parameter counts and compute statistics in Tables~\ref{tab:arch_config_interleaved}--\ref{tab:param_counts_interleaved}. Tables~\ref{tab:model_configs_fullmoe}--\ref{tab:training_config} provide the full (non-interleaved) MoE architecture (Table~\ref{tab:arch_config_fullmoe}), parameter counts (Table~\ref{tab:param_counts_fullmoe}), and training configurations.

\begin{table*}[htbp]
\centering
\caption{Interleaved MoE model configurations and statistics. All models share: Context Length = 8192, Vocab Size = 200704, Grouped Query Attention (GQA), loss-free load balancing, 32 routed experts, and 0 shared experts.}
\label{tab:model_configs_interleaved}
\begin{subtable}{\textwidth}
\centering
\caption{Architecture configuration. Layers are interleaved dense and MoE; Act.\ FFN $= \text{top-$K$} \times \text{FFN/Expert}$; Total FFN $= 32 \times \text{FFN/Expert}$.}
\label{tab:arch_config_interleaved}
\resizebox{\textwidth}{!}{%
\begin{tabular}{@{}lcccccccccc@{}}
\toprule
\textbf{Model} & \textbf{Layers} & \textbf{Hidden} & \textbf{FFN} & \textbf{Attn} & \textbf{Attn} & \textbf{MoE} & \textbf{FFN/} & \textbf{top-$K$} & \textbf{Act. FFN} & \textbf{Total FFN} \\
 & & \textbf{Dim} & \textbf{Dim} & \textbf{Heads} & \textbf{Groups} & \textbf{Layers} & \textbf{Expert} & & \textbf{(MoE)} & \textbf{(MoE)} \\
\midrule
\multicolumn{11}{c}{\textit{20-layer}} \\
20-layer (baseline) & 20 & 2048 & 7168 & 16 & 16 & 10 & 3072 & 2 & 6144 & 98304 \\
\midrule
\multicolumn{11}{c}{\textit{10-layer}} \\
10-layer (top2) & 10 & 1024 & 3584 & 8 & 8 & 5 & 1536 & 2 & 3072 & 49152 \\
\midrule
\multicolumn{11}{c}{\textit{8-layer}} \\
8-layer (top1) & 8 & 768 & 2688 & 6 & 6 & 4 & 1152 & 1 & 1152 & 36864 \\
\bottomrule
\end{tabular}%
}
\end{subtable}

\vspace{1em}

\begin{subtable}{\textwidth}
\centering
\caption{Parameter counts (in millions) and compute. Train Tokens (B) are the actual pre-training token budgets used in experiments.}
\label{tab:param_counts_interleaved}
\resizebox{\textwidth}{!}{%
\begin{tabular}{@{}lS[table-format=2.1]S[table-format=3.1]S[table-format=3.1]S[table-format=3.1]S[table-format=4.1]S[table-format=4.1]S[table-format=4.1]S[table-format=3.1]S[table-format=1.2e1]@{}}
\toprule
\textbf{Model} & {\textbf{Router}} & {\textbf{Dense}} & {\textbf{MoE Act.}} & {\textbf{MoE Total}} & {\textbf{Embed}} & {\textbf{Act. Params}} & {\textbf{Total Params}} & {\textbf{Train}} & {\textbf{Total Train}} \\
 & {\textbf{(M)}} & {\textbf{Layer (M)}} & {\textbf{Layer (M)}} & {\textbf{Layer (M)}} & {\textbf{(M)}} & {\textbf{w/ Emb (M)}} & {\textbf{w/ Emb (M)}} & {\textbf{Tokens (B)}} & {\textbf{FLOPs}} \\
\midrule
\multicolumn{10}{c}{\textit{20-layer}} \\
20-layer (baseline) & 0.07 & 60.8 & 54.6 & 620.8 & 822.1 & 1976.3 & 7638.6 & 383.5 & 3.43e21 \\
\midrule
\multicolumn{10}{c}{\textit{10-layer}} \\
10-layer (top2) & 0.03 & 15.2 & 13.7 & 155.2 & 411.0 & 555.4 & 1263.2 & 92.6 & 1.27e20 \\
\midrule
\multicolumn{10}{c}{\textit{8-layer}} \\
8-layer (top1) & 0.02 & 8.6 & 5.0 & 87.3 & 308.3 & 362.6 & 691.8 & 59.5 & 3.36e19 \\
\bottomrule
\end{tabular}%
}
\end{subtable}

\end{table*}

\begin{table*}[htbp]
\centering
\caption{Full (non-interleaved) MoE model configurations and statistics. All models share: Context Length = 8192, Vocab Size = 200704, Grouped Query Attention (GQA), loss-free load balancing, 256 routed experts with Top-8 routing, and 0 shared experts.}
\label{tab:model_configs_fullmoe}
\vspace{0.5em}

\begin{subtable}{\textwidth}
\centering
\caption{Architecture configuration. Each configuration has two dense layers (prefix and suffix); the remaining layers are MoE. Exp FFN = FFN dimension per expert.}
\label{tab:arch_config_fullmoe}
\begin{adjustbox}{max width=\textwidth}
\begin{tabular}{@{}cccccccccc@{}}
\toprule
\multicolumn{5}{c}{\textbf{General Config}} & \textbf{GQA} & \multicolumn{4}{c}{\textbf{MoE Config}} \\
\cmidrule(r){1-5} \cmidrule(lr){6-6} \cmidrule(l){7-10}
\textbf{Layers} & \textbf{Hidden} & \textbf{FFN Dim} & \textbf{Vocab} & \textbf{Heads} & \textbf{Groups} & \textbf{MoE L} & \textbf{Exp FFN} & \textbf{Experts} & \textbf{Act Exp} \\
\midrule
4  & 256 & 896   & 200K & 2 & 2 & 2 & 384 & 256 & 8 \\
6  & 256 & 896   & 200K & 2 & 2 & 4 & 384 & 256 & 8 \\
8  & 384 & 1,344 & 200K & 3 & 3 & 6 & 576 & 256 & 8 \\
10 & 512 & 1,792 & 200K & 4 & 4 & 8 & 768 & 256 & 8 \\
\bottomrule
\end{tabular}
\end{adjustbox}
\end{subtable}

\vspace{1em}

\begin{subtable}{\textwidth}
\centering
\caption{Parameter counts and compute. MoE Act/Total = activated/total parameters per MoE layer. FLOPs/Token counts non-embedding forward-pass FLOPs.}
\label{tab:param_counts_fullmoe}
\begin{adjustbox}{max width=\textwidth}
\begin{tabular}{@{}ccccccc@{}}
\toprule
\multicolumn{4}{c}{\textbf{Layer Parameters}} & \multicolumn{2}{c}{\textbf{Total Parameters}} & \textbf{Compute} \\
\cmidrule(r){1-4} \cmidrule(lr){5-6} \cmidrule(l){7-7}
\textbf{Router} & \textbf{Dense Layer} & \textbf{MoE Act} & \textbf{MoE Total} & \textbf{Activated} & \textbf{Total} & \textbf{FLOPs/Token} \\
\midrule
65.5K & 0.95M & 2.69M & 75.8M & 110M & 256M  & 9.50e7 \\
65.5K & 0.95M & 2.69M & 75.8M & 115M & 408M  & 1.54e8 \\
98.3K & 2.14M & 6.00M & 171M  & 194M & 1.18B & 3.97e8 \\
131K  & 3.80M & 10.6M & 303M  & 298M & 2.64B & 8.15e8 \\
\bottomrule
\end{tabular}
\end{adjustbox}
\end{subtable}

\vspace{1em}

\begin{subtable}{\textwidth}
\centering
\caption{Training configuration. Token budgets and learning rates determined by scaling laws.}
\label{tab:training_config}
\begin{adjustbox}{max width=\textwidth}
\begin{tabular}{@{}ccccc@{}}
\toprule
\textbf{Tokens} & \textbf{Batch Size} & \textbf{Steps} & \textbf{LR} & \textbf{Total FLOPs} \\
\midrule
2.20B & 16 & 16,790 & 1.69e-2 & 2.09e17 \\
2.31B & 16 & 17,611 & 1.16e-2 & 3.54e17 \\
3.89B & 16 & 29,663 & 5.95e-3 & 1.54e18 \\
5.96B & 32 & 22,741 & 3.75e-3 & 4.86e18 \\
\bottomrule
\end{tabular}
\end{adjustbox}
\end{subtable}

\end{table*}

\subsection{Full MoE generalization results}
\label{app:fullmoe_results}

To verify that expert upcycling transfers beyond the interleaved architecture, we evaluate on a full MoE with 256 experts and top-$K{=}8$ ($\sim$3\% activation ratio), consistent with frontier MoE models~\citep{deepseekai2024deepseekv3, glm45, kimik2}. At the $\sim$1B total parameter scale with $\|g\|^2$ utility-based upcycling, the full MoE achieves strong gap closure across all model sizes tested (Table~\ref{tab:fullmoe_results}). Both interleaved and full MoE architectures show strong gap closure, confirming that expert upcycling is effective across MoE families and activation ratios.

\begin{table}[h]
\centering
\caption{Full MoE upcycling results (256$\to$512 experts, top-$K{=}8$, $\|g\|^2$ utility-based duplication) across model sizes from 154M to 1B total parameters. All values are validation loss ($\downarrow$).}
\label{tab:fullmoe_results}
\small
\setlength{\tabcolsep}{4pt}
\begin{tabular}{@{}cccccc@{}}
\toprule
\textbf{Active (M)} & \textbf{Total (M)} & \textbf{Fixed-256} & \textbf{Upcycled 512} & \textbf{Fixed-512} & \textbf{Eff.\ (\%)} \\
\midrule
7   & 154   & 3.564 & 3.519 & 3.516 & 93.8 \\
13  & 305   & 3.153 & 3.071 & 3.067 & 95.3 \\
40  & 1028  & 2.819 & 2.767 & 2.763 & 92.9 \\
\bottomrule
\end{tabular}
\end{table}

\clearpage
\section{Heuristic upcycling methods and results}
\label{app:heuristic_upcycling_methods}

This appendix provides the full description and experimental evaluation of heuristic expert and router upcycling methods referenced in \S~\ref{sec:results-utility}. We evaluated 10 expert-level and 10 router-level initialization heuristics designed to seed diversity among duplicates while retaining inherited capability. As reported in the main text, none of these heuristics meaningfully outperform simple copy-paste duplication.

\subsection{Summary of results}
\label{app:heuristic_results_summary}

Table~\ref{tab:heuristic_highlight} summarizes the best variant per method category on the 10-layer, 32$\to$64 expert interleaved MoE model. Across all expert and router heuristics, validation losses lie in a narrow band around the copy-paste baseline, with improvements of at most $\sim$$10^{-3}$. Several more aggressive methods (SVD mixing, orthogonalization) slightly degrade performance. These results indicate that \emph{maintaining a low initial loss} at the upcycling boundary is more important than introducing artificial diversity: perturbations can disrupt the pre-upcycling solution and force CPT to allocate capacity to recovery rather than specialization. In contrast, simple duplication provides a warm initialization, and loss-free load balancing ensures that all experts receive gradient signal, allowing training dynamics to drive expert differentiation naturally.

\begin{table}[t]
\centering
\caption{Heuristic upcycling: best variant per method category (10-layer, 32$\to$64 experts). No heuristic meaningfully outperforms copy-paste ($\leq 10^{-3}$ loss); several degrade performance. Full results in Appendix Tables~\ref{tab:expert_heuristics} and~\ref{tab:router_heuristics}.}
\label{tab:heuristic_highlight}
\small
\setlength{\tabcolsep}{4pt}
\begin{tabular}{@{}llc@{}}
\toprule
\textbf{Category} & \textbf{Best Variant} & \textbf{Val Loss} \\
\midrule
\multicolumn{3}{c}{\textit{Expert Initialization (baseline = 2.76858)}} \\
\midrule
Copy (baseline)    & --                      & 2.76858 \\
Scaled Copy        & $s{=}0.90$              & \textbf{2.76815} \\
Copy + Noise       & $\lambda{=}0.01$        & 2.76859 \\
Interpolate        & $\alpha{=}0.2$          & 2.76888 \\
Sparse Code Mix    & dict=1024               & 2.76938 \\
SVD Perturb        & values only             & 2.76938 \\
Drop Upcycle       & drop=0.3 (Xavier)       & 2.77043 \\
SVD Mix            & ratio=0.3               & 2.77472 \\
Orthogonal         & standard                & 2.77487 \\
\midrule
\multicolumn{3}{c}{\textit{Router Initialization (baseline = 2.76858)}} \\
\midrule
Copy (baseline)    & --                      & 2.76858 \\
Interpolate        & heavy                   & \textbf{2.76776} \\
SVD Perturb        & moderate                & 2.76816 \\
Copy + Noise       & very light              & 2.76819 \\
Bias Disc Dup      & all layers              & 2.76827 \\
Temp.\ Scaled      & sharp                   & 2.76847 \\
Adversarial        & light                   & 2.76848 \\
Bias Enc Dup       & all layers              & 2.77831 \\
\bottomrule
\end{tabular}
\end{table}

\subsection{Expert upcycling heuristics}
\label{app:expert_heuristics}

Table~\ref{tab:expert_method_desc} summarizes the 10 expert-level initialization heuristics evaluated. All methods upcycle from 32$\to$64 experts on the 10-layer interleaved MoE.

\begin{table}[h]
\centering
\caption{Expert upcycling heuristic descriptions and key hyperparameters.}
\label{tab:expert_method_desc}
\small
\begin{tabular}{@{}p{2.4cm}p{7.2cm}p{3.8cm}@{}}
\toprule
\textbf{Method} & \textbf{Description} & \textbf{Key Hyperparameters} \\
\midrule
Copy (baseline) & Duplicate each expert exactly, producing identical twins. & -- \\
Copy + Noise & Add calibrated Gaussian noise to duplicated expert weights. & $\lambda \in \{0.01, 0.02, 0.05\}$ \\
Drop Upcycle \citep{nakamura2025dropupcycling} & Re-initialize a fraction of weight-matrix columns while keeping the remainder from the original expert. & drop $\in \{0.3, 0.5, 0.7\}$; init: Xavier, Kaiming, Normal \\
Shuffle Columns & Randomly permute columns of weight matrices, preserving marginal statistics but changing connectivity. & -- \\
Interpolate & Create new experts by interpolating between adjacent experts: $w^{\text{new}}_{i} \leftarrow \alpha w_{i} + (1{-}\alpha) w_{i+1}$. & $\alpha \in \{0.2, 0.5, 0.7\}$ \\
Orthogonal & Gram--Schmidt orthogonalization to make duplicates orthogonal (in parameter space) to originals. & $\epsilon = 10^{-6}$ \\
Scaled Copy & Scale duplicated weights by a constant factor, changing magnitude while preserving direction. & $s \in \{0.90, 0.95, 1.05\}$ \\
SVD Perturb & Compute $W = USV^\top$ and perturb selected components (singular values, vectors, and/or drop small components). & $\sigma_v \!\in\! [0.05, 0.2]$, $\sigma_u \!\in\! [0.02, 0.1]$, drop $\!\in\! [0, 0.3]$ \\
SVD Mix & Compute SVDs for multiple experts and create hybrids by mixing singular vectors. & mix ratio $\in \{0.2, 0.3, 0.5, 0.7\}$ \\
Sparse Code Mix & Approximate $W \!\approx\! DC$ via ISTA-style sparse coding, then mix sparse codes between experts. & dict $\in \{256, 512, 1024\}$; sparsity $\in \{0.05, 0.2\}$ \\
\bottomrule
\end{tabular}
\end{table}

\subsection{Router upcycling heuristics}
\label{app:router_heuristics}

Table~\ref{tab:router_method_desc} summarizes the 10 router-level initialization heuristics evaluated on the same 10-layer model.

\begin{table}[h]
\centering
\caption{Router upcycling heuristic descriptions.}
\label{tab:router_method_desc}
\small
\begin{tabular}{@{}p{2.8cm}p{10.6cm}@{}}
\toprule
\textbf{Method} & \textbf{Description} \\
\midrule
Copy (baseline) & Duplicate router weights exactly. \\
Copy + Noise & Add Gaussian noise to duplicated router weights to seed routing diversity. \\
Interpolate & Interpolate router weights with neighbors: $r^{\text{dup}}_i \leftarrow \alpha r_i + (1{-}\alpha) r_{i+1}$. \\
Bias Only & Keep router weights identical but modify only biases to shift routing preferences. \\
Scaled Copy & Scale duplicate router weights to adjust routing sharpness/entropy. \\
Perturb New Only & Freeze original router weights and perturb only duplicates. \\
Orthogonal & Construct duplicate router weights orthogonal to originals. \\
Adversarial & Push duplicate router weights in the opposite direction of originals. \\
Temperature Scaled & Apply temperature scaling to duplicate router logits (pre-softmax) to control entropy. \\
SVD Perturb & SVD-based perturbations to router weights to preserve coarse routing structure while varying details. \\
\bottomrule
\end{tabular}
\end{table}

\subsection{Expert heuristic results}
\label{app:expert_heuristic_results}

Table~\ref{tab:expert_heuristics} reports validation loss for all expert upcycling heuristics on the 10-layer, 32-expert interleaved MoE model.

\begin{table*}[htbp]
\centering
\caption{Expert heuristic upcycling results (interleaved MoE, 10-layer, 32 experts). All methods upcycle from 32$\to$64 experts and train with identical CPT budget. Baseline copy-paste = 2.76858. Method descriptions and hyperparameters in Table~\ref{tab:expert_method_desc}.}
\label{tab:expert_heuristics}
\small
\begin{adjustbox}{max width=\textwidth}
\begin{tabular}{@{}llcc@{}}
\toprule
\textbf{Method} & \textbf{Qualifier / Variant} & \textbf{Key Param} & \textbf{Val Loss} \\
\midrule
\multicolumn{4}{c}{\textit{Copy-based}} \\
\midrule
Copy (baseline)       & --                    & --              & 2.76858 \\
Copy + Noise          & conservative          & $\lambda$=0.01  & 2.76859 \\
Copy + Noise          & moderate              & $\lambda$=0.02  & 2.76860 \\
Copy + Noise          & aggressive            & $\lambda$=0.05  & 2.76895 \\
Scaled Copy           & slight reduction      & $s$=0.95        & \textbf{2.76846} \\
Scaled Copy           & moderate reduction    & $s$=0.90        & \textbf{2.76815} \\
Scaled Copy           & slight amplification  & $s$=1.05        & 2.76896 \\
\midrule
\multicolumn{4}{c}{\textit{Drop upcycling}} \\
\midrule
Drop Upcycle          & conservative (Xavier)  & drop=0.3       & 2.77043 \\
Drop Upcycle          & moderate (Xavier)      & drop=0.5       & 2.77125 \\
Drop Upcycle          & aggressive (Xavier)    & drop=0.7       & 2.77304 \\
Drop Upcycle          & moderate (Kaiming)     & drop=0.5       & 2.77337 \\
Drop Upcycle          & moderate (Normal)      & drop=0.5       & 2.77203 \\
\midrule
\multicolumn{4}{c}{\textit{Interpolation}} \\
\midrule
Interpolate           & slight                & $\alpha$=0.2    & 2.76888 \\
Interpolate           & balanced              & $\alpha$=0.5    & 2.76953 \\
Interpolate           & heavy                 & $\alpha$=0.7    & 2.76966 \\
\midrule
\multicolumn{4}{c}{\textit{Orthogonal}} \\
\midrule
Orthogonal            & standard              & $\epsilon$=1e-6 & 2.77487 \\
\midrule
\multicolumn{4}{c}{\textit{SVD-based}} \\
\midrule
SVD Perturb           & conservative          & $\sigma_v$=0.05, $\sigma_u$=0.02 & 2.77476 \\
SVD Perturb           & moderate              & $\sigma_v$=0.1, $\sigma_u$=0.05  & 2.77483 \\
SVD Perturb           & aggressive            & $\sigma_v$=0.2, $\sigma_u$=0.1   & 2.77472 \\
SVD Perturb           & drop heavy            & drop=0.3        & 2.77441 \\
SVD Perturb           & values only           & $\sigma_v$=0.15 & 2.76938 \\
SVD Perturb           & vectors only          & $\sigma_u$=0.1  & 2.77468 \\
SVD Mix               & light                 & ratio=0.2       & 2.77483 \\
SVD Mix               & moderate              & ratio=0.3       & 2.77472 \\
SVD Mix               & heavy                 & ratio=0.5       & 2.77472 \\
SVD Mix               & aggressive            & ratio=0.7       & 2.77475 \\
\midrule
\multicolumn{4}{c}{\textit{Sparse Code Mix}} \\
\midrule
Sparse Code Mix       & small dict            & dict=256        & 2.77034 \\
Sparse Code Mix       & standard              & dict=512        & 2.76955 \\
Sparse Code Mix       & large dict            & dict=1024       & 2.76938 \\
Sparse Code Mix       & high sparsity         & sparsity=0.2    & 2.76981 \\
Sparse Code Mix       & low sparsity          & sparsity=0.05   & 2.76951 \\
Sparse Code Mix       & heavy mixing          & mix=0.5         & 2.76964 \\
Sparse Code Mix       & more iterations       & n\_iter=200     & 2.76945 \\
\bottomrule
\end{tabular}
\end{adjustbox}
\end{table*}

\subsection{Router heuristic results}
\label{app:router_heuristic_results}

Table~\ref{tab:router_heuristics} reports validation loss for all router upcycling heuristics on the same 10-layer model.

\begin{table*}[htbp]
\centering
\caption{Router heuristic upcycling results (interleaved MoE, 10-layer, 32 experts). All methods upcycle routers from 32$\to$64 expert slots with identical CPT budget. Baseline router copy = 2.76858. Methods are described in Appendix~\ref{app:heuristic_upcycling_methods}.}
\label{tab:router_heuristics}
\small
\begin{adjustbox}{max width=\textwidth}
\begin{tabular}{@{}llc@{}}
\toprule
\textbf{Method} & \textbf{Variant} & \textbf{Val Loss} \\
\midrule
\multicolumn{3}{c}{\textit{Copy-based}} \\
\midrule
Copy (baseline)        & --              & 2.76858 \\
Copy + Noise           & very light      & 2.76819 \\
Copy + Noise           & light           & 2.76844 \\
Copy + Noise           & moderate        & 2.76843 \\
Copy + Noise           & aggressive      & 2.76856 \\
\midrule
\multicolumn{3}{c}{\textit{Interpolation}} \\
\midrule
Interpolate            & light           & 2.76861 \\
Interpolate            & balanced        & 2.76877 \\
Interpolate            & heavy           & \textbf{2.76776} \\
\midrule
\multicolumn{3}{c}{\textit{Bias-based}} \\
\midrule
Bias Only              & all layers      & 2.76836 \\
Bias + Noise Only      & --              & 2.76845 \\
Bias Enc Dup           & --              & 2.77827 \\
Bias Enc Dup           & all layers      & 2.77831 \\
Bias Disc Dup          & --              & 2.76827 \\
Bias Disc Dup          & all layers      & 2.76827 \\
\midrule
\multicolumn{3}{c}{\textit{Scaling \& Temperature}} \\
\midrule
Scaled Copy            & very soft       & 2.76874 \\
Scaled Copy            & soft            & 2.76878 \\
Scaled Copy            & sharp           & 2.76860 \\
Temperature Scaled     & soft            & 2.76877 \\
Temperature Scaled     & very soft       & 2.76883 \\
Temperature Scaled     & sharp           & 2.76847 \\
\midrule
\multicolumn{3}{c}{\textit{Perturbation}} \\
\midrule
Perturb New Only       & light           & 2.76843 \\
Perturb New Only       & moderate        & 2.76860 \\
Perturb New Only       & aggressive      & 2.76858 \\
Orthogonal             & --              & 2.76867 \\
Adversarial            & light           & 2.76848 \\
Adversarial            & strong          & 2.76990 \\
\midrule
\multicolumn{3}{c}{\textit{SVD-based}} \\
\midrule
SVD Perturb            & conservative    & 2.76852 \\
SVD Perturb            & moderate        & \textbf{2.76816} \\
SVD Perturb            & aggressive      & 2.76826 \\
\bottomrule
\end{tabular}
\end{adjustbox}
\end{table*}

\section{Extended related work}
\label{app:extended_related_work}

This appendix provides detailed comparisons between expert upcycling and each line of related work, organized by category. For each cited paper, we explain the relationship to our contributions and highlight key differences.

\subsection{MoE foundations and scaling laws}

\paragraph{Sparsely-Gated MoE \citep{shazeer2017outrageously}.}
Introduced the sparsely-gated MoE layer with top-$K$ routing and load-balancing losses. Our work builds directly on this architecture: expert upcycling preserves the top-$K$ routing mechanism while expanding the expert pool, keeping per-token FLOPs constant.

\paragraph{GLaM \citep{du2022glam}.}
Demonstrated MoE scaling to 1.2T parameters with favorable quality-per-FLOP trade-offs. GLaM trains from scratch at the target scale; expert upcycling achieves similar capacity expansion goals but avoids the full from-scratch cost by growing an existing smaller MoE checkpoint.

\paragraph{Switch Transformers \citep{fedus2022switch} and GShard \citep{Lepikhin2020GShard}.}
Simplified MoE routing (top-1) and scaled expert parallelism across thousands of devices. These works focus on training infrastructure and routing simplification for from-scratch training. Our method is complementary: it provides an alternative path to large expert counts by growing mid-training rather than starting large.

\paragraph{Joint MoE Scaling Laws \citep{ludziejewski2025jointmoe}.}
Derived scaling laws jointly over active parameters, total parameters, and training tokens for MoE models, showing that MoE can be memory-efficient. These scaling laws directly motivate expert upcycling: they predict that increasing expert count at fixed active compute improves quality, and our method operationalizes this prediction without restarting training.

\paragraph{Fine-Grained MoE Scaling Laws \citep{krajewski2024finegrained}.}
Extended scaling analysis to fine-grained (many small) experts. Our experiments span activation ratios from 3\% to 50\%, covering both coarse and fine-grained regimes, and we show that upcycling efficiency is robust across this range.

\paragraph{Greater Leverage MoE Scaling Laws \citep{tian2025greaterleverage}.}
Identified sparsity as the most effective lever for improving MoE performance among MoE hyperparameters. This directly supports our approach: expert upcycling decreases the activation ratio (active-to-total parameter ratio) by adding experts while holding top-$K$ fixed.

\paragraph{Optimal Sparsity for MoE \citep{abnar2025optimalsparsity}.}
Derived compute-optimal sparsity schedules showing that the optimal number of experts depends on the compute budget. Our work complements this by providing a mechanism to \emph{change} the activation ratio mid-training as the compute budget evolves, rather than committing to a fixed activation ratio from the start.

\paragraph{Scaling Data-Constrained LMs \citep{muennighoff2023dataconstrained}.}
Showed that repeated data yields diminishing returns beyond $\sim$4 epochs, proposing scaling laws for data-constrained regimes. This is relevant to expert upcycling because our continued pre-training phase uses additional tokens; their findings inform how much additional data is needed for effective upcycling versus when returns diminish.

\subsection{Growing network size during training}

\paragraph{Net2Net \citep{Chen2016Net2Net}.}
Introduced function-preserving transforms (Net2WiderNet, Net2DeeperNet) for accelerating training via knowledge transfer. Expert upcycling is inspired by the function-preserving philosophy of Net2Net: our duplication + router expansion produces a warm initialization whose initial loss matches the parent model's loss (see \S~\ref{subsec:operator}). The key difference is that Net2Net grows dense width or depth, while we grow MoE expert count, a fundamentally different axis that exploits sparse activation.

\paragraph{Stacking Your Transformers \citep{du2024stackingtransformers}.}
Systematically studied model growth operators (depth stacking, width expansion) for efficient LLM pre-training, showing that stacking can save 50\%+ of training compute. Our work addresses a complementary growth axis: rather than stacking layers (depth), we duplicate experts (MoE width). The two approaches could potentially be composed for compound savings.

\paragraph{Stacking as Accelerated Gradient Descent \citep{agarwal2024stacking}.}
Provided optimization-theoretic justification for why layer stacking accelerates training, connecting it to accelerated gradient descent. Our regret-bound decomposition (Appendix~\ref{app:proof}) serves an analogous role for expert duplication, but the mechanisms differ: stacking exploits depth-wise structure, while our analysis addresses the capacity gap and initialization quality introduced by expert replication.

\paragraph{Deep Progressive Training \citep{bu2025dpt}.}
Analyzed progressive depth expansion through optimization theory and feature learning, establishing conditions for function-preserving growth to maintain convergence. Our work extends this progressive training philosophy to the MoE expert-count dimension, with an analogous theoretical framework analyzing warm initialization and symmetry breaking in the sparse routing setting.

\paragraph{RAPTR: Progressive Subnetwork Training \citep{panigrahi2024raptr}.}
Proposed training random subnetworks (depth-wise, width-wise) and progressively increasing subnetwork size, showing this generalizes and fixes issues with layer dropping. RAPTR operates within a fixed architecture by training subsets; expert upcycling instead \emph{expands} the architecture by adding new experts. The approaches are complementary: RAPTR could be applied during the continued pre-training phase after upcycling.

\paragraph{Multi-linear Operators for Model Reuse \citep{pan2023multilinear}.}
Proposed correlating each weight of a target model to all weights of a pretrained model via multi-linear operators, capturing inter- and intra-weight interactions. This addresses dense model growth with full weight correlation. Expert upcycling takes a simpler approach (exact duplication) that achieves warm initialization by construction, avoiding the computational overhead of learning cross-weight mappings, and operates in the MoE setting where sparse routing provides a natural growth axis.

\subsection{Upcycling from dense checkpoints}

\paragraph{Sparse Upcycling \citep{komatsuzaki2022sparse}.}
The foundational work on converting dense checkpoints to MoE by replicating the FFN into multiple experts. The critical distinction from our work: Sparse Upcycling performs a \emph{dense $\rightarrow$ MoE} transition, while expert upcycling performs \emph{MoE $\rightarrow$ larger MoE} expansion. This difference has significant implications: (i) our starting point already has trained routing and specialized experts, (ii) we preserve the existing sparse computation pattern, and (iii) our method enables iterative capacity expansion without architectural regime changes.

\paragraph{Drop-Upcycling \citep{nakamura2025dropupcycling}.}
Improves upon Sparse Upcycling by partially re-initializing expert weights during the dense-to-MoE conversion, promoting diversity and maintaining a learning curve slope similar to from-scratch MoE training. Our experiments with heuristic initialization methods (Appendix~\ref{app:heuristic_upcycling_methods}) include analogous drop-based strategies along with nine other approaches (noise injection, SVD perturbation, orthogonalization, interpolation, sparse code mixing, and others); we find that all heuristic methods yield negligible gains ($<$1e-3 validation loss) over simple copy-paste for MoE-to-MoE upcycling. This suggests that in the already-sparse setting, the trained router provides sufficiently strong symmetry-breaking signals during CPT, making elaborate initialization diversity unnecessary.

\paragraph{Scaling Laws for Upcycling MoE \citep{liew2025scalinglaws}.}
Derived scaling laws specifically for the dense-to-MoE upcycling transition, revealing a critical interaction term: upcycling efficiency \emph{decreases} with longer dense pre-training because the sunk dense tokens slow subsequent MoE training progress. Our MoE-to-MoE setting exhibits the opposite behavior: upcycling efficiency \emph{increases} with pre-training duration (Table~\ref{tab:budget_highlight}a, transition-timing sweep), because the base MoE already possesses sparse routing structure and specialized experts that transfer directly to the expanded architecture. This qualitative reversal underscores that dense-to-MoE and MoE-to-MoE upcycling are governed by fundamentally different dynamics.

\paragraph{BAM! Parameter Upcycling \citep{zhang2024bam}.}
Explored simple and efficient parameter upcycling strategies for creating MoE from dense models, finding that straightforward approaches can be surprisingly effective. Our findings align: simple expert duplication (COPY) is competitive with more elaborate heuristics for MoE-to-MoE growth. However, BAM focuses on the dense-to-MoE transition while we address the already-sparse setting.

\paragraph{Upcycling LLMs into MoE \citep{he2024upcyclingllm}.}
Studied upcycling at scale (NVIDIA), examining router design choices and expert granularity for converting dense LLMs to MoE. Their work provides practical recipes for the dense-to-MoE transition at production scale. Our contribution is orthogonal: we address the next step, growing an already-sparse model to have more experts.

\paragraph{DeRS: Efficient Upcycled MoE \citep{huang2025ders}.}
Proposed decomposing upcycled MoE experts into shared base weights and lightweight delta weights for parameter efficiency, applicable to both training and compression. DeRS addresses parameter redundancy \emph{within} upcycled experts; our work addresses how to \emph{create} more experts from existing ones. The two approaches could be combined: one could apply DeRS compression after expert upcycling to reduce the parameter overhead of the expanded model.

\paragraph{Nexus: Adaptive Upcycling \citep{gritsch2025nexus}.}
Introduced an adaptive router with domain embeddings that can incrementally integrate new experts trained on new domains. Nexus and expert upcycling share the goal of expanding MoE capacity post-training, but differ in mechanism: Nexus adds independently trained domain-specific experts with a specialized router, while we duplicate existing experts and rely on continued pre-training for specialization. Our approach does not require domain-specific expert training and works with standard MoE routers.

\paragraph{LayerMoE \citep{zhang2025layermoe}.}
Expands multilingual MoE capacity by adding new experts to an existing MoE backbone, with a layer-wise allocation algorithm that decides how many new experts each layer needs based on language-specific representation characteristics. Tokens from old languages are routed to the original experts while new-language tokens use the added experts, avoiding catastrophic forgetting. LayerMoE adds new, independently-trained experts targeted at new signal (new languages); expert upcycling instead duplicates existing experts and relies on continued pre-training for specialization on the same data distribution.

\paragraph{Branch-Train-Stitch \citep{zhang2025bts}.}
Combines independently trained, domain-specialized LLM experts into a generalist via lightweight ``stitch layers'' inserted between frozen experts and a seed LLM. New experts can be added with minimal retraining of the stitch layers, supporting flexible capacity expansion. BTS adds new, externally-trained experts that bring new domain signal; expert upcycling reuses existing experts in place and does not require parallel domain-specific training runs.

\paragraph{Branch-Train-MiX \citep{sukhbaatar2024btx}.}
Trains domain-specialized expert LLMs in embarrassingly parallel fashion, then merges them into an MoE with learned routing. BTX creates expert diversity through independent training on different data domains; expert upcycling creates diversity through duplication followed by symmetry breaking during joint continued pre-training. BTX requires parallel training infrastructure for each expert branch, while our method operates on a single training run.

\paragraph{Symphony-MoE \citep{wang2026symphonymoe}.}
Constructs an MoE from multiple disparate pretrained models (e.g., Qwen2 + Qwen2.5-Coder) via functional alignment using neuron permutation and SLERP-based parameter merging. Symphony-MoE maximizes initial expert diversity by leveraging independently trained models, but requires access to multiple compatible architectures and a training-free harmonization stage to resolve parameter space misalignment. Expert upcycling requires only a single MoE checkpoint and achieves diversity through continued training rather than multi-source assembly, making it applicable in settings where diverse pretrained models are unavailable.

\paragraph{Reuse, Don't Retrain \citep{parmar2024reuse}.}
Provided practical guidelines for continued pre-training of language models, covering data distribution design and learning rate schedules. These recipes are complementary to expert upcycling: they inform how to design the continued pre-training phase after expert expansion. Our work focuses on the architectural growth step itself rather than the training recipe.

\subsection{Expert specialization, diversity, and routing}

\paragraph{Representation Collapse in MoE \citep{chi2022representationcollapse}.}
Identified and addressed the representation collapse problem where MoE routing encourages token clustering around expert centroids, proposing hyperspherical routing as a solution. This is relevant to expert upcycling because duplication initially creates identical expert representations; our theoretical analysis of symmetry breaking explains how continued training escapes this degenerate state.

\paragraph{Grove MoE \citep{wu2025grovemoe}.}
Introduced a new MoE architecture with heterogeneous expert sizes (adjugate experts) and a dynamic per-token activation mechanism; the paper's primary contribution is architectural. Their 33B-parameter instantiation (GroveMoE-Base) is produced by applying an upcycling strategy to the Qwen3-30B-A3B-Base MoE checkpoint during mid-training, but upcycling is a training strategy in their setup rather than the main contribution. Grove MoE therefore operates on a different axis than expert upcycling: it modifies expert sizes and activation dynamics, while expert upcycling modifies expert count with homogeneous experts and fixed top-$K$ routing. The two are orthogonal and could be combined.

\paragraph{MoE Design Choices \citep{fan2024empiricalmoe}.}
Systematically ablated MoE design choices, finding that routing granularity (token-level vs.\ sequence-level) has the largest impact on performance, while expert collapse does not necessarily hurt validation perplexity. Their finding that top-2 token-level routing is the only configuration surpassing the dense baseline with equivalent total parameters motivates our choice of top-$K$ values across experiments (top-$K{=}2$ for the interleaved MoE main result and top-$K{=}8$ for the full MoE result, matching frontier configurations).

\subsection{MoE deployment and saliency metrics}

\paragraph{Expert Pruning and Skipping \citep{lu2024experts}.}
Developed methods for reducing MoE inference cost by pruning or dynamically skipping experts. Expert upcycling and expert pruning are natural duals: upcycling adds capacity for quality improvement, pruning removes capacity for efficiency. Our utility-based expert selection identifies the most important experts to duplicate, concentrating added capacity where the loss is most sensitive.

\paragraph{Saliency metrics \citep{han2015learning, lecun1989optimal, hassibi1992second, molchanov2017taylor, soen2024tradeoffs, li2025fishersforfree}.}
The pruning literature provides a rich toolkit of saliency metrics. We repurpose these tools for capacity expansion: gradient-based importance scores identify which experts contribute most to the loss landscape, and we preferentially duplicate these high-utility experts.

\subsection{Conditional compute and dynamic routing}

\paragraph{Mixture-of-Depths \citep{raposo2024mixtureofdepths}.}
Mixture-of-Depths (MoD) routes tokens to skip transformer layers entirely, varying the compute depth per token rather than routing to different expert FFNs. Like MoE, MoD decouples total model capacity from per-token compute, but through a depth-routing mechanism rather than expert selection. Expert upcycling is orthogonal: we expand the number of experts in an MoE model, not the depth of computation per token. The two approaches could in principle be combined.

\subsection{Continual and lifelong learning}

\paragraph{Continual pre-training and plasticity.}
Continual learning studies how models can acquire new knowledge without forgetting previously learned representations, with plasticity (the ability to continue learning) as a central concern~\citep{kirkpatrick2017ewc, gupta2023continual}.
Expert upcycling involves a form of continued pre-training on new data after an architectural change, which raises related questions: do duplicated experts retain the source model's representations while also specializing on new data?
Our experiments show that the upcycled model maintains most of the from-scratch baseline's capability across 11 downstream benchmarks (within 0.3 points on average at 100\% CPT), suggesting that catastrophic forgetting is not a significant concern in our setting.
We attribute this to the warm initialization: duplicated experts start from trained weights, so the model does not need to relearn existing knowledge from scratch.

\end{document}